\documentclass{article}

\PassOptionsToPackage{numbers, compress}{natbib}

\usepackage[preprint]{neurips_2026}

\usepackage[utf8]{inputenc} % allow utf-8 input
\usepackage[T1]{fontenc}    % use 8-bit T1 fonts
\usepackage{hyperref}       % hyperlinks
\usepackage{url}            % simple URL typesetting
\usepackage{booktabs}       % professional-quality tables
\usepackage{amsfonts}       % blackboard math symbols
\usepackage{nicefrac}       % compact symbols for 1/2, etc.
\usepackage{microtype}      % microtypography
\usepackage[table]{xcolor}         % colors
\usepackage{makecell}
\usepackage{graphicx}
\usepackage{subcaption}
\usepackage{booktabs}
\usepackage{hyperref}
\usepackage{amsmath}
\usepackage{amssymb}
\usepackage{mathtools}
\usepackage{amsthm}
\usepackage{multirow}
\usepackage{placeins}
\usepackage{float}
\usepackage{thm-restate}

\usepackage{algorithm}
\usepackage{algpseudocode}
\newcommand{\Rcomment}[1]{\Comment{\textcolor{gray!60}{#1}}}
\newcommand{\Lcomment}[1]{\Statex \textcolor{gray!60}{\(\triangleright\) #1}}
\newcommand{\CCG}{\cellcolor{gray!15}}

\usepackage{tabularx}
\usepackage{array}
\newcolumntype{Y}{>{\raggedright\arraybackslash}X}

\usepackage[capitalize,noabbrev]{cleveref}

\theoremstyle{plain}

\theoremstyle{definition}

\theoremstyle{remark}

\usepackage[textsize=tiny]{todonotes}

\def\ours{NCO}

\title{\ours{}: A Versatile Plug-in for Handling Negative Constraints in Decoding}

\author{
Hyundong Jin%
~\;~\;~Yo-Sub Han\\
Yonsei University \\
Seoul, Republic of Korea \\
\texttt{\{\href{mailto:tuzi04@yonsei.ac.kr}{tuzi04}, \href{mailto:emmous@yonsei.ac.kr}{emmous}\}@yonsei.ac.kr}\\
}

\begin{document}

\maketitle

\begin{abstract}
Controlling Large Language Models~(LLMs) to prevent 
the generation of undesirable content,
such as profanity and personally identifiable information~(PII), 
has become increasingly critical. 
While earlier approaches relied on post-processing or resampling, 
recent research has shifted towards constrained decoding methods 
that control outputs during generation 
to mitigate high computational costs and quality degradation. 
However, preventing multiple forbidden hard constraints or regex constraints 
from appearing anywhere in the output is computationally challenging. 
A straightforward solution is to convert these constraints 
into a single automaton that tracks all forbidden patterns during decoding, 
but this often becomes impractically large. 
Standard regex engines also do not readily support the operations 
needed to build such a constraint, such as complement and intersection.
In order to address these limitations, we propose \ours{}, 
a decoding strategy that performs online pattern matching 
over finite hard constraints and regex constraints, 
reducing computational overhead without inducing state explosion. 
\ours{} is fully compatible with standard inference strategies, 
including various sampling methods 
and beam search, while also supporting soft masking 
for probabilistic suppression. 
We empirically demonstrate its effectiveness across practical tasks,
including PII and profanity suppression.
Our implementation is available at \url{https://github.com/hyundong98/NCO-Decoding.git}.
\end{abstract}

\section{Introduction}
\label{NCO:main:sec:introduction}
With the widespread adoption of Large Language Models~(LLMs), critical security
and safety issues, including Personally Identifiable Information~(PII) leakage,
offensive content, hate speech, and jailbreaking, have attracted significant
attention. A straightforward and traditional approach to addressing these issues
involves post-processing, where the LLM's output is inspected to detect and
suppress problematic parts. 
While such a rule-based approach may improve model safety,
it suffers from an inherent limitation 
since forbidden content is handled only after generation. 
As a result, the final output often contains unnatural masking artifacts 
or abrupt deletions that degrade fluency, coherence, and overall readability.

Constrained decoding has emerged as a viable alternative 
that intervenes during generation rather than after it. 
By suppressing invalid tokens at each decoding step, 
constraints derived from formal language theory~\citep{Sipser1996}, 
such as regular expressions~(regexes) or context-free grammars~(CFG), 
can be enforced on the model's output.
Existing research primarily aims to ensure that the generated
output belongs to the set of strings defined by a given constraint,
namely positive constraints. 
In particular, regex constraints have been widely investigated, 
as they offer a user-friendly format and can be deterministically
represented via deterministic finite automata~(DFA).

\begin{figure}
    \centering
    \includegraphics[width=0.85\linewidth]{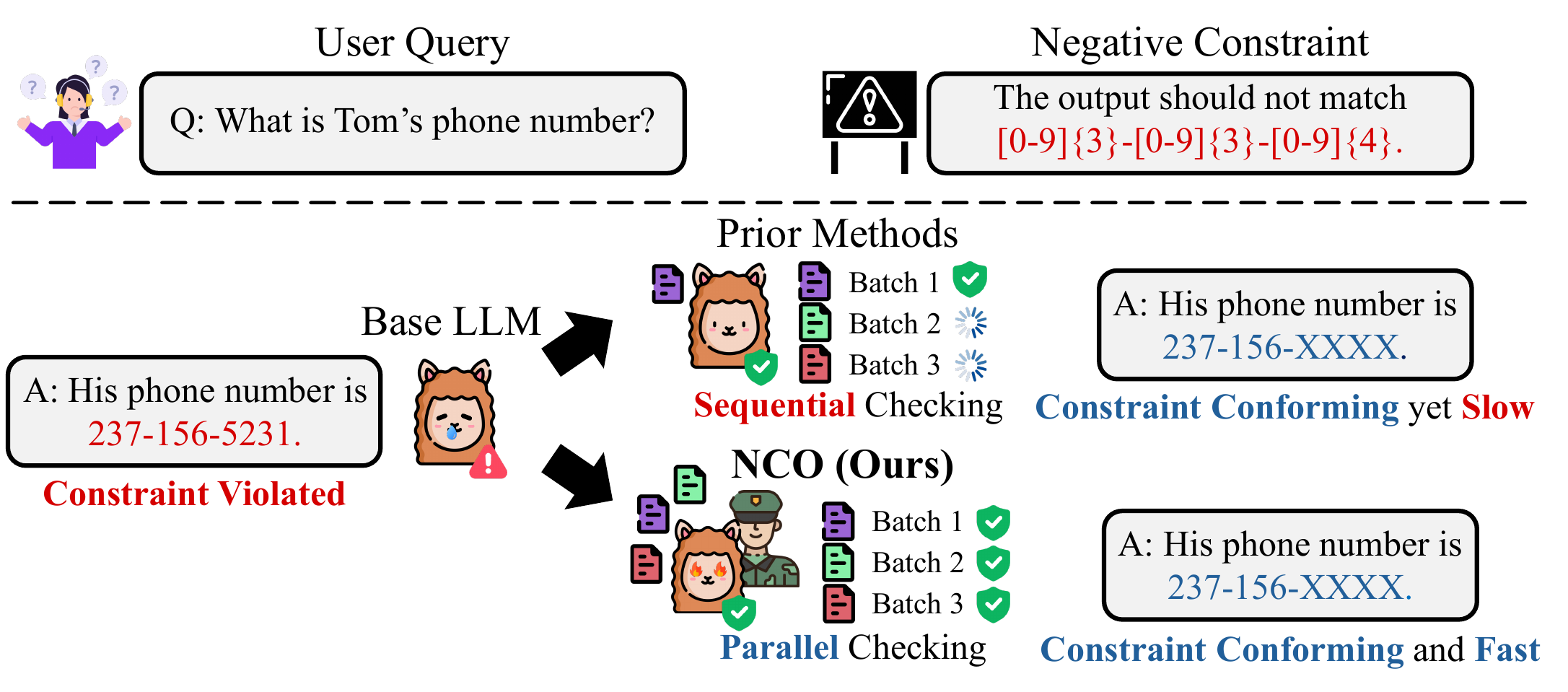}
    \caption{Motivating example of \ours{}.
        The unconstrained base model is fast but may generate a substring 
        that matches the negative constraint.
        Prior methods avoid the violation with additional decoding overhead.
        \ours{} suppresses invalid continuations and produces a
        constraint conforming response while preserving generation efficiency.
    }
    \label{NCO:main:fig:motivation}
\end{figure}

In practice, the constraints we care about for safety are often negative 
rather than positive.
Instead of specifying what the model should generate, 
we typically want to ensure that 
certain finite hard constraints or regex constraints 
never appear anywhere in the output as substrings. 
This can still be expressed 
within the regular languages, 
since they are closed under complement and intersection. 
However, this theoretical expressiveness does not translate easily 
into practical implementations. 
Computing complements generally requires conversion to finite automata, 
and combining multiple constraints can cause 
the number of states to grow exponentially. 
The difficulty becomes even greater 
when a forbidden pattern must be excluded 
from every possible substring position, 
since this introduces substantial nondeterminism 
and can lead to state explosion even for simple patterns.

In order to address this gap, we propose \ours{}, a decoding-time plug-in 
that enforces multiple negative substring constraints with low runtime overhead.
\ours{} is formulated as a logit-level intervention 
that integrates with existing generation pipelines 
without requiring retraining, fine-tuning, or modifying the model.
It is also compatible with standard decoding rules, 
including greedy decoding, top-$k$/top-$p$ sampling, 
temperature sampling, and beam search.
\ours{} ensures that no forbidden string and
no substring matching a forbidden regex appears in the generated output.
Rather than constructing a single global avoidance automaton,
\ours{} performs online multi-pattern matching during decoding 
by maintaining the constraints separately and 
composing their per-token masks at each step. 
A BPE-aware preprocessing procedure further reduces the one-time setup cost 
so that the per-step decoding overhead remains small in practical deployment.
Our main contributions are as follows.

\begin{itemize}
    \item We propose \ours{}, a decoding-time plug-in 
    for enforcing multiple negative substring constraints 
    without modifying the model 
    or constructing a global avoidance automaton.

    \item We develop two decoding mechanisms within this framework, 
    an Aho-Corasick trie~\citep{AhoC1975} based strategy 
    for finite hard constraints 
    and a parallel DFA simulation based strategy 
    for regex constraints,
    accelerated by a BPE-aware preprocessing procedure.

    \item We systematically evaluate \ours{} in a controlled setting 
    by varying the number of constraints, 
    and further validate it on real-world PII and profanity suppression tasks 
    across nine models.
\end{itemize}

\section{Related Works}

We assume basic familiarity with regular languages, finite automata,
byte-pair encoding~(BPE), and the Aho-Corasick algorithm, 
and refer to Appendix~A for formal background.
Regular languages are closed under complement and intersection, 
so negative substring constraints can be expressed in principle 
as regex constraints.
In practice, this construction is often unsuitable for decoding.
Avoiding a forbidden language $L$ requires 
tracking occurrences of $\Sigma^*L\Sigma^*$, 
where a match may start at any output position.
This introduces substantial nondeterminism.
Combining multiple negative constraints further requires product constructions 
whose state spaces grow multiplicatively.
\ours{} is motivated by this gap between formal expressiveness 
and practical decoding-time enforcement.

\subsection{Postprocessing}

Postprocessing is a common deployment strategy 
for reducing forbidden content in completed outputs.
A generated string is inspected after decoding, 
and spans matching finite forbidden strings or regexes 
are removed, masked, or replaced.
For finite hard constraints, efficient detection can be implemented 
using multi-pattern matching algorithms such as Aho-Corasick~\citep{AhoC1975}.
For regex constraints, standard regex matchers can be used.
Postprocessing operates after generation rather than during decoding.
It therefore cannot prevent the model 
from entering a forbidden continuation,
and the final output may require deletion, masking, or replacement of generated spans.
These edits can introduce visible artifacts and reduce fluency.
\ours{} instead enforces negative constraints before token selection 
by masking continuations that would complete a forbidden substring.

\subsection{Constrained Decoding}

Constrained decoding enforces formal constraints 
by filtering invalid tokens during generation.
Automata and grammar-based methods have been widely used 
for structured output
generation.
\citet{DeutschUR2019} studied constrained sequential inference with automata,
facilitating the integration of arbitrary regex constraints.
\citet{ScholakSB2021} introduced PICARD, 
which uses incremental parsing during beam search for SQL generation, 
and \citet{PoesiaPLTSMG2022} proposed Synchromesh 
for syntactically valid programming-language generation.
Guidance-style systems such as Outlines~\citep{WillardL2023} and later
FSM-based frameworks~\citep{BeurerFV2024,DongRCLXZC2024} 
make grammar and regex constraints practical for modern LLM inference.
Recent work further improves the interaction between tokenization 
and formal constraints through flexible grammar-constrained decoding 
and finite-state transduction views of tokenization~\citep{CognettaO2025, ParkZD2025}.

Most of these methods focus on positive constraints, 
where the output should belong to a specified formal language.
Our setting is different.
The decoder must ensure that no forbidden string or regex match appears 
anywhere as a substring.
A direct automata-based solution would require 
complementing substring occurrence languages 
and intersecting the resulting constraints.
This can be prohibitively large under multiple negative constraints.
\ours{} avoids constructing this global automaton 
by maintaining online matching states 
and composing token masks during decoding.

Logic and search-based methods provide another route to constrained generation.
NeuroLogic Decoding~\citep{LuWZLBC2021} formulates lexical and logical constraints 
as satisfiability conditions during beam search, 
and later work improves search with additional heuristics~\citep{LuWWJKKLQYZSC2022}.
\citet{YuMAS2021} introduced logic-guided generation mechanisms 
for logical consistency in abstract reasoning tasks.
These methods are effective when the desired behavior can be expressed as
lexical inclusion or logical satisfaction conditions.
Negative substring avoidance is less naturally expressed in this form, 
since the constraint must rule out every occurrence of a forbidden string 
or regex match at any substring position.
\ours{} therefore treats negative constraints as online pattern matching problems 
and computes invalid-token masks from string and regex specifications.

\subsection{Methods for Negative Constraint Enforcement}

Rejection sampling is a natural baseline for negative substring constraints.
At each decoding step, the decoder samples a candidate token and checks 
whether appending it would create a forbidden substring.
If the candidate violates the constraint, 
the decoder rejects it and samples another candidate at the same position.
This procedure gives a simple inference-time enforcement method 
without model training or construction of a global avoidance automaton.
It is closely related to decoding-time constraint enforcement 
through token-level filtering~\citep{GengJPW2023}, 
but is specialized here to negative substring avoidance.
Its cost, however, depends on the probability mass assigned to invalid tokens.
When invalid continuations are likely, repeated resampling can increase latency.
\ours{} instead computes the invalid-token mask directly from the current
constraint state and avoids repeated sampling attempts.

GUARD~\citep{DengLPHFXZW2025} is another relevant baseline 
because it performs generation-time restriction with trie-based target suppression.
It is closely aligned with our finite hard constraint setting, 
and its hard and soft restriction strategies make it useful 
for evaluating both constraint satisfaction and generation quality.
Its trie-based formulation, however, is primarily designed 
for finite hard constraints.
Plain trie traversal can be inefficient for multi-pattern matching 
unless augmented with failure links as in Aho-Corasick~\citep{AhoC1975}, 
and it does not directly handle general regex constraints.
\ours{} uses an Aho-Corasick trie for finite hard constraints 
and parallel DFA simulation for regex constraints.

\section{Problem Definition}
We consider inference-time constrained decoding 
with negative substring constraints. 
Unlike positive constraints, which specify the set of strings 
that the output should belong to, 
negative constraints specify patterns 
that must not appear in the generated output. 
Our focus is on explicit constraints 
given as strings or regular languages, 
rather than semantic notions of safety or privacy.
Let $\Sigma$ be an alphabet 
and let $\Gamma\subseteq\Sigma^*$ be a vocabulary of a tokenizer. 
Each token $v\in\Gamma$ corresponds to a string over $\Sigma$.
For two strings $x,y\in\Sigma^*$, 
we write $x\sqsubseteq y$ if $x$ is a substring of $y$, 
i.e., $y=uxv$ for some $u,v\in\Sigma^*$.
Given an input string $X\in\Sigma^*$, 
decoding produces an output string $Y$ incrementally. 
If the current output prefix is $Y_t\in\Sigma^*$ 
and the next token is $w\in\Gamma$, 
then the updated prefix is $Y_{t+1}=Y_t\cdot w$,
where $\cdot$ denotes string concatenation.

We formulate constrained decoding as an online validity filtering problem.
For a constraint specification $\mathcal{C}$, 
let $\mathrm{Valid}_{\mathcal{C}}(Y)$ denote 
whether an output string $Y$ satisfies the constraint. 
At each decoding step, the constraint restricts the next token candidate set to
\[
    \Gamma_{\mathcal{C}}(Y_t)
    =
    \{w\in\Gamma : \mathrm{Valid}_{\mathcal{C}}(Y_t\cdot w)\}.
\]
A standard decoding rule, such as greedy decoding, top-$k$/top-$p$ sampling,
temperature sampling, or beam search, 
can then be applied over the filtered logits or candidate set.
Thus, the main computational problem is to compute
$\Gamma_{\mathcal{C}}(Y_t)$ efficiently at every decoding step 
without constructing an impractically large global automaton.
We study this problem for two forms of negative substring constraints.

\paragraph{Problem 1: Finite Hard Constraints.}
In the first setting, the constraint specification is 
a finite set of hard constraints 
represented by forbidden strings $P\subseteq\Sigma^*$. 
An output string is valid 
if it contains no string in $P$ as a substring:
\[
    \mathrm{Valid}_{P}(Y)
    \iff
    \forall u\in P,\; u\not\sqsubseteq Y.
\]
Accordingly, a token $w$ is invalid at prefix $Y_t$ 
if appending it creates a forbidden substring,
i.e., if $u\sqsubseteq Y_t\cdot w$ for some $u\in P$.
This is a global constraint over the generated string, 
not a static blacklist over individual tokens. 
In particular, a violation may be formed across token boundaries 
even when no single token is itself forbidden. 
Therefore, the decoder must track partial matches 
in the suffix of the generated prefix, 
which motivates our online multiple-pattern matching approach.

\paragraph{Problem 2: Forbidden Regex Constraints.}
In the second setting, the constraint specification is 
a finite set of regex constraints compiled into DFAs $D$.
Each automaton $A\in D$ recognizes the language $L(A)\subseteq\Sigma^*$.
An output string is valid 
if none of its substrings is accepted by any automaton:
\[
    \mathrm{Valid}_{D}(Y)
    \iff
    \forall A\in D,\ \nexists z\in L(A)
    \text{ such that } z\sqsubseteq Y.
\]
Equivalently, a token $w$ is invalid at prefix $Y_t$ 
if $Y_t\cdot w$ contains a substring accepted by some $A\in D$. 
This setting captures regex-like constraints, such as structured PII patterns, where the forbidden set may be infinite and cannot be enumerated as strings. 
A direct theoretic solution would require constructing an automaton 
that rejects every string containing a substring in $\bigcup_{A\in D}L(A)$, 
and combining multiple such constraints can lead to exponential state growth. 
Our goal is therefore to enforce these constraints online 
while avoiding explicit construction of this global avoidance automaton.

\section{Proposed Methods}

We propose Negative COnstrained~(\ours{}) decoding, 
an inference time decoding framework 
for enforcing multiple negative substring constraints.
At each decoding step, \ours{} receives the current output prefix 
and computes a token level mask over the vocabulary $\Gamma$.
A token is masked if appending it would make the output contain 
a forbidden string or a substring accepted by a forbidden DFA.
The masked logits are then passed to a standard decoding rule 
such as greedy decoding, top-$k$ sampling, top-$p$ sampling, temperature sampling, 
or beam search.

The central design choice of \ours{} is to avoid constructing 
a single global automaton for the full avoidance language.
For finite hard constraints, \ours{} uses an Aho-Corasick trie 
to track partial matches compactly.
For regex constraints, \ours{} keeps the constraint DFAs separate 
and simulates their active states in parallel.
In both cases, expensive token level transitions are precomputed once 
and reused throughout decoding.
Figure~\ref{NCO:main:fig:main_figure} gives an overview of this procedure.
Detailed algorithms and runtime analyses are provided in
Appendix~\ref{NCO:app:sec:detailed_algorithms}
and~\ref{NCO:app:sec:proof_and_runtime_analysis}, respectively.

\subsection{Finite Hard Constraints}\label{NCO:main:subsec:string_constraints}

We first consider a finite set of forbidden strings $P\subseteq\Sigma^*$.
The goal is to ensure that no string in $P$ appears 
anywhere as a substring of the generated output.
This is not equivalent to a static blacklist over vocabulary tokens.
A forbidden string may be split across multiple tokens, 
so a violation can be completed only after several decoding steps.

\ours{} handles this setting using an Aho-Corasick trie built from $P$.
For each generated sequence, the decoder maintains a single trie state.
This state represents the longest suffix of the current output prefix 
that is also a prefix of some forbidden string.
When a candidate token $w\in\Gamma$ is considered, 
\ours{} follows the precomputed transition obtained 
by reading the string represented by $w$ from the current trie state.
If this transition reaches an output state of the trie, 
then appending $w$ would complete a forbidden string as a substring, 
and the token is masked.
This construction allows \ours{} to detect violations 
that cross token boundaries without rescanning the generated prefix.
The trie state is updated only after the next token is selected.
Thus, the decoding process maintains exactly the information 
needed for future constraint checks 
while keeping the per sequence state compact.

We further reduce preprocessing cost 
by exploiting the merge structure of BPE tokenization.
Each merged token can be viewed as the concatenation of two shorter tokens.
Therefore, the transition for a merged token can be computed 
by composing the transition of its left child 
and the transition of its right child.
By processing tokens in BPE merge order, 
\ours{} reuses previously computed token transitions 
instead of scanning every token string from scratch.
This yields a token level transition table 
that can be queried directly during decoding.

\begin{figure*}
    \centering
    \includegraphics[width=1.0\linewidth]{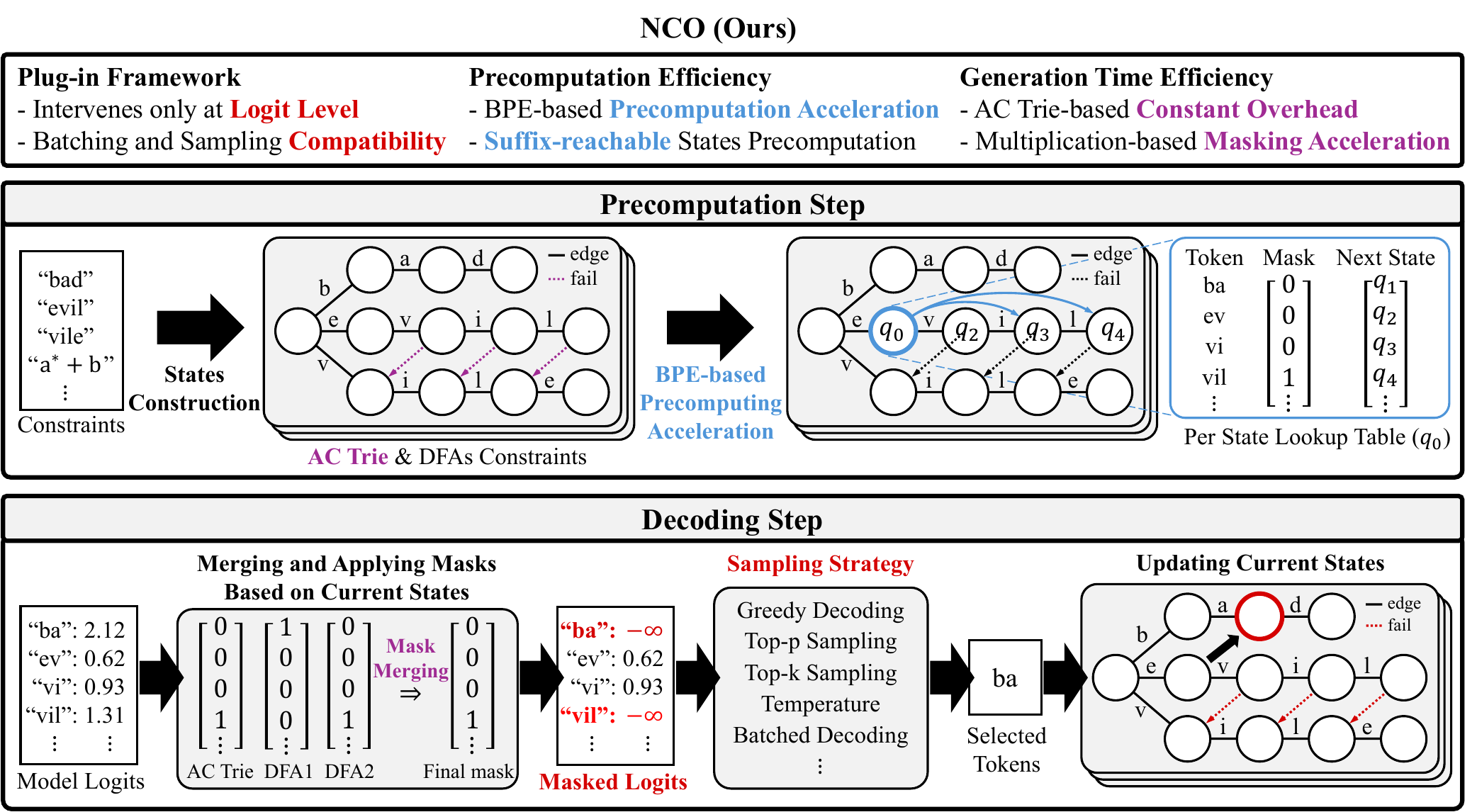}
    \caption{Overall architecture of \ours{}. Given forbidden strings and regex
    patterns, we build the corresponding constraint automata once during
    precomputation and reuse them throughout decoding. At each decoding step,
    \ours{} updates the automaton state for each sequence, computes a
    token-level mask to filter invalid next-token candidates, and applies
    standard decoding on the masked logits.}
    \label{NCO:main:fig:main_figure}
\end{figure*}

\subsection{Forbidden Regex Constraints}\label{NCO:main:subsec:regular_constraints}

We next consider a set of forbidden DFAs
$D=\{A_1,\ldots,A_l\}$, where each
$A_i=(Q_i,\Sigma,\delta_i,s_i,F_i)$ recognizes a forbidden regular language.
The goal is to ensure that no substring of the generated output is accepted 
by any automaton in $D$.
Equivalently, for each $A_i$, 
the generated output must avoid $\Sigma^*L(A_i)\Sigma^*$.
A direct automaton based solution would first construct 
the substring occurrence language $\Sigma^*L(A_i)\Sigma^*$, 
complement it, and then intersect the
resulting automata across all constraints.
Although this construction is valid at the level of regular languages, 
it is not practical for decoding.
The leading $\Sigma^*$ allows a match to start at any position, which 
makes the substring occurrence automaton highly nondeterministic.
Obtaining a deterministic automaton may require tracking 
many possible partial matches at once.
In addition, enforcing multiple constraints 
by an explicit intersection product can require a state space of size
$|Q_1|\cdot |Q_2|\cdots |Q_l|.$
\ours{} avoids this global construction.

For each DFA $A_i$, \ours{} maintains an active state set $S_t^i$ 
for the current output prefix $Y_t$.
Intuitively, $S_t^i$ contains the states reached by running $A_i$ 
on every suffix of $Y_t$.
This representation captures all possible matches 
that may have started at different positions in the generated output.
When a candidate token $w$ is considered, \ours{} checks whether reading $w$
from any active state reaches an accepting state.
It also accounts for matches that start inside the candidate token itself 
by using precomputed suffix transitions from the start state.
If either case reaches an accepting state, 
then appending $w$ would create a forbidden substring, 
and the token is masked.
For multiple regex constraints, 
\ours{} computes one mask for each DFA 
and combines the masks so that a token is allowed 
only if it is valid for every constraint.
The automata are stored and updated separately, 
so the representation grows additively with $\sum_i |Q_i|$ 
rather than multiplicatively with $\prod_i |Q_i|$.
The mask computation is implemented with precomputed token level tables 
and GPU parallel operations,
which makes the approach practical even 
when several regex constraints are enforced simultaneously.

Similar to the string case, BPE based precomputation is used to reduce repeated work.
Token transitions are computed compositionally from the BPE merge structure.
In addition, \ours{} precomputes which states are reachable 
from the start state by suffixes of each token.
These suffix transitions are needed 
because a forbidden substring may begin inside the newly appended token.
During decoding, the method reuses these tables to update the active state sets
and to construct the token mask 
without constructing the full avoidance automaton.

\subsection{Correctness of Proposed Methods}
We establish the correctness of \ours{}. 
The masking rules in Sections~\ref{NCO:main:subsec:string_constraints}
and~\ref{NCO:main:subsec:regular_constraints} are sufficient 
to rule out violations in the generated output.
The proof proceeds by induction on decoding steps.
The Aho-Corasick trie state captures all relevant partial matches 
for finite hard constraints, 
and the active state sets capture all possible ongoing matches
for regex constraints.
Composing the resulting masks prevents any candidate token 
from completing a forbidden string or an accepted substring.
We provide the full proofs 
in Appendix~\ref{NCO:app:sec:proof_and_runtime_analysis}.

\begin{restatable}[Correctness for finite hard constraints]{theorem}{thmStringCorrectness}
\label{NCO:main:thm:string_correctness}
Let $P\subseteq\Sigma^*$ be a finite set of forbidden strings. If \ours{}
generates an output $Y$ using the finite hard constraint mask 
induced by the Aho-Corasick trie of $P$, 
then no string in $P$ appears in $Y$ as a substring.
\end{restatable}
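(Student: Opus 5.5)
The proof goes by induction on the number of decoding steps, with an invariant that ties the maintained trie state to the generated prefix. We use the standard Aho-Corasick construction: trie nodes correspond to prefixes of strings in $P$, a goto/failure transition function $\delta$ is extended to strings, and a node is an \emph{output state} when some suffix of its label lies in $P$, following dictionary/output links. For a prefix $Y_t$ we write $\sigma(Y_t)$ for the longest suffix of $Y_t$ that is a prefix of some string in $P$.

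The invariant $I_t$ has two parts:
\begin{enumerate}
\item the maintained state $q_t$ is the node labelled $\sigma(Y_t)$;
\item $\mathrm{Valid}_P(Y_t)$ holds.
\end{enumerate}
The base case is $Y_0$, the empty output (or the prompt boundary, since constraints apply only to generated text). Here $q_0$ is the root and no pattern occurs, assuming $\varepsilon\notin P$. If $\varepsilon\in P$ the statement is vacuous or degenerate, and we will remark on that.

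The key lemma is the classical Aho-Corasick property, applied to token strings. For any string $x$ and any character $c$, $\sigma(xc)$ equals the label of $\delta(\mathrm{node}(\sigma(x)),c)$. Moreover, some $u\in P$ is a suffix of $xc$ if and only if that node is an output state. The reason is that every suffix of $xc$ lying in $P$ has the form $u'c$, where $u'$ is a suffix of $x$ that is a prefix of a pattern, so $u'$ is a suffix of $\sigma(x)$. We iterate this lemma character by character over $w=c_1\cdots c_m$. The precomputed token transition must record not only the final state but also whether any intermediate state along the path is an output state, since a pattern could end in the middle of $w$. We will state the mask this way: $w$ is masked iff some state visited while reading $w$ from $q_t$ is an output state.

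We also verify that the BPE composition preserves this. If $w=w_Lw_R$, then the final state of $w$ is $\delta(\delta(q,w_L),w_R)$, and the hit flag is $\mathrm{hit}(q,w_L)\lor\mathrm{hit}(\delta(q,w_L),w_R)$. This follows by induction along the merge order.

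For the inductive step, suppose $I_t$ holds and the selected token $w$ was unmasked. Any occurrence of some $u\in P$ in $Y_t w$ either lies entirely inside $Y_t$, which is excluded by $I_t$, or ends at some position inside $w$. An occurrence ending at the $j$-th character of $w$ is a suffix of $Y_t c_1\cdots c_j$. By the lemma, this means the $j$-th visited state is an output state, so $w$ would have been masked, a contradiction. The lemma iterated over the characters of $w$ also gives $q_{t+1}=\sigma(Y_tw)$. So $I_{t+1}$ holds, and at termination $Y$ avoids all of $P$.

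For beam search and sampling the argument is the same, applied per hypothesis, because every emitted token passes the mask. Soft masking is excluded, since the theorem refers to the hard mask.

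The main obstacle is the key lemma, in particular the completeness of output detection: patterns that are proper suffixes of other prefixes are caught only through the output and dictionary links. We also need to ensure that the mask accounts for patterns ending strictly inside a token, not just at the final state. If the paper's transition table records only the final state, we would instead argue that the output flag is propagated into a sticky "dead" sink state, which makes checking the final state suffice.
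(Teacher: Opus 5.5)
Your proposal is correct and follows the same route as the paper's proof. Both argue by induction on decoding steps, using the standard Aho--Corasick invariant that $q_t$ is the node of the longest suffix of $Y_t$ that is a prefix of some pattern, and the fact that the mask blocks exactly the tokens whose reading from $q_t$ visits an output state. Your version is more explicit than the paper's: it carries the state update $q_{t+1}=\sigma(Y_tw)$ through the induction and checks that intermediate hits are recorded, which the paper's table $B[q,w]$ and its BPE recurrence $B[q,w]=B[q,u]\lor B[\Delta[q,u],v]$ already do.
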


\begin{restatable}[Correctness for regex constraints]{theorem}{thmRegexCorrectness}
\label{NCO:main:thm:regex_correctness}
Let $D=\{A_1,\ldots,A_l\}$ be a finite set of regex constraints 
compiled into DFAs, 
where each $A_i=(Q_i,\Sigma,\delta_i,s_i,F_i)$ recognizes a forbidden language. 
If \ours{} generates an output $Y$ using the regex constraint mask induced
by $D$, then no substring of $Y$ is accepted by any automaton in $D$.
\end{restatable}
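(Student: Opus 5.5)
The statement to prove is Theorem~\ref{NCO:main:thm:regex_correctness}. The plan is an induction on decoding steps with the invariant that, for every $i$, the active set equals
\[
S_t^i=\{\delta_i^*(s_i,z) : z \text{ is a suffix of } Y_t\},
\]
where the empty suffix contributes $s_i$. We also carry the second part of the invariant, that $Y_t$ is valid for $D$: no substring of $Y_t$ lies in $\bigcup_i L(A_i)$. For the base case, $Y_0$ is the empty string (or the prompt-independent start, since only generated output is constrained). Then $S_0^i=\{s_i\}$. Validity holds vacuously, provided we assume $\varepsilon\notin L(A_i)$; otherwise every string is invalid and the statement is degenerate. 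If $s_i\in F_i$, I would add a remark that the mask blocks everything.

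The key step is a decomposition of substrings of $Y_t\cdot w$. Any substring of $Y_t w$ that is not a substring of $Y_t$ must end inside $w$, so it has the form $z\,w'$ with $w'$ a nonempty prefix of $w$. There are two cases.
\begin{itemize}
\item[(a)] The substring starts in $Y_t$. Then $z$ is a suffix of $Y_t$, and the state reached is $\delta_i^*(q,w')$ for some $q\in S_t^i$, by the invariant.
\item[(b)] The substring starts inside $w$. Then it is $x$ with $w=u x v$, i.e.\ a substring of $w$ starting at an interior position.
\end{itemize}
The mask checks, from each active state, whether reading $w$ passes through $F_i$ at some prefix. Note that it must examine all prefixes of $w$, not just the endpoint. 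The mask also uses the precomputed start-state table for matches beginning inside $w$. So I will state precisely what the precomputed tables record: for each token $w$ and state $q$, whether some nonempty prefix of $w$ drives $q$ into $F_i$; and whether some substring of $w$ is accepted from $s_i$. I would then verify that the BPE composition computes exactly these quantities. For a merged token $w=w_Lw_R$, a prefix of $w$ is either a prefix of $w_L$ or $w_L$ followed by a prefix of $w_R$, so the flag composes as $\mathrm{hit}(q,w)=\mathrm{hit}(q,w_L)\lor \mathrm{hit}(\delta^*(q,w_L),w_R)$. The substring flag similarly splits into three cases: substrings of $w_L$, substrings of $w_R$, and substrings crossing the boundary. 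The last case is handled by hit-flags from the states reached by suffixes of $w_L$, which is exactly the suffix-reachable set the paper precomputes.

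Given this, the token $w$ is unmasked for $A_i$ iff no substring of $Y_t w$ ending in $w$ is accepted by $A_i$. Combined with validity of $Y_t$, this shows $Y_t w$ is valid for $A_i$. Because the masks are composed by conjunction, the selected token is valid for all $i$ simultaneously, whatever decoding rule (greedy, sampling, beam) picks among the unmasked tokens. Soft masking is excluded; the theorem is for the hard mask. Finally the update $S_{t+1}^i=\{\delta_i^*(q,w):q\in S_t^i\}\cup\{\text{states reached from } s_i \text{ by suffixes of } w\}$ restores the invariant, since suffixes of $Y_tw$ are either $z w$ with $z$ a suffix of $Y_t$, or suffixes of $w$. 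If the implementation deduplicates or prunes states, the set-equality invariant is unaffected.

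The main obstacle I expect is the table-correctness lemma, particularly the boundary-crossing case in the BPE composition and the requirement that acceptance be detected at every intermediate prefix rather than only at the token's end state. I would handle it by induction over the BPE merge order, with base tokens being single symbols where everything is immediate. The Theorem then follows by the step induction, since $Y$ is the final prefix $Y_T$.
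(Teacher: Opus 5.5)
Your proposal is correct and is essentially the paper's proof. It uses the same invariant $S_t^i=\{\delta_i(s_i,z) : z \text{ a suffix of } Y_t\}$, and the same split of new substrings of $Y_t w$ into two kinds: those starting in $Y_t$, caught by token transitions from active states (with $s_i\in S_t^i$ covering the empty suffix), and those starting inside $w$, caught by the suffix tables from $s_i$. It also uses the same induction on validity, with per-automaton masks combined by conjunction.

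Your additions refine details the paper takes for granted rather than changing the route. The first is an explicit lemma that the BPE-composed tables record acceptance at every intermediate prefix and for boundary-crossing substrings. The second is the hypothesis $\varepsilon\notin L(A_i)$, which the paper's base case silently needs. One small correction: the implementation drops the sink $\bot$, so your set-equality invariant holds only after removing $\bot$. This is harmless, because $\bot$ can never reach $F_i$.
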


\section{Experimental Results}\label{NCO:main:sec:experiments}
We evaluate \ours{} on two practical negative constraint tasks. 
The first task is profanity suppression on 
RealToxicityPrompts~\citep{GehmanGSCS2020}~(RTP) 
with finite hard constraints. 
The second task is PII suppression on the 
Enron Email Dataset~\citep{KlimtY2004}~(Enron) 
with regex constraints. 
Across both tasks, we compare \ours{} with two baselines.
We report relative throughput across multiple model families as the main result 
and analyze how each method scales under batched generation.
Detailed experimental settings are provided 
in Appendix~\ref{NCO:app:sec:experimental_details}.

\subsection{Experimental Settings}
For profanity suppression, we use instruction following models 
that represent general purpose generation settings.
The constraints are given as forbidden strings from a bad word lexicon.
This task evaluates whether a decoding method can suppress undesirable lexical content 
while preserving the throughput of standard batched generation.
For PII suppression, we use models trained on the Enron Email Dataset.
These models are selected to induce a high unconstrained violation rate.
The constraints are specified as regex patterns for common PII types, 
including email addresses, phone numbers, social security numbers, and credit card numbers.
Across both tasks, we compare \ours{} with rejection sampling and GUARD.
GUARD is trie-based and only supports finite hard constraints.
We therefore evaluate GUARD on this task by converting the regex constraints
into finite hard constraints 
using matched PII spans extracted from the evaluation set.
We abbreviate rejection sampling as RS in the tables.
We focus on decoding efficiency as the main metric.
Specifically, we report relative throughput 
with respect to the base unconstrained model.
For each model and batch size, we define relative throughput as
\[
\mathrm{Relative\ Throughput}
=\frac{T_{m,b}}{T_{\mathrm{Base},b}}\times100,
\]
where $T_{m,b}$ is the absolute throughput of method $m$ at batch size $b$.
Since relative throughput is normalized 
by the corresponding unconstrained base model, 
the base model is $100.0 \pm 0.0$ by definition 
and is omitted from the main tables.
This normalization allows us 
to compare the runtime overhead of different constraint enforcement methods 
across models and batch sizes.
We also report the violation rate to measure constraint satisfaction.
We define
\[
\mathrm{Violation\ Rate} = \frac{N_{\text{v}}}{N} \times 100,
\]
where $N$ is the total number of generated responses 
and $N_{\text{v}}$ is the number of responses that violate 
at least one constraint. 
A generation is counted as a violation 
if it contains at least one forbidden string 
or at least one substring matching a forbidden regex pattern.
The violation rates of unconstrained models 
are reported in the main throughput tables 
to indicate the difficulty of each setting.
For the constrained methods, both rejection sampling and \ours{} achieve zero
violation rate under the evaluated constraints.
We measure generation quality using perplexity, 
but perplexity remains at a similar level across methods, 
including the baselines.
Absolute throughput values for base models, rejection sampling, GUARD, and \ours{}
as well as perplexity results, and violation rates for all methods,
are provided in Appendix~\ref{NCO:app:sec:additional_experimental_results}.

\begin{table*}[th]
    \centering
    \small
    \caption{
        The comparison table of relative throughput (= ratio of tokens-per-sec between method and base) on the profanity suppression task. % at the same batch size.
        The violation rate of each  model is reported within (~~). 
        RS denotes the rejection sampling and GUARD denotes the baseline method using tries.
    }
    \setlength{\tabcolsep}{4.5pt}
    \label{NCO:main:tab:profanity_main_results}
    \begin{tabular}{ccrrrrrrr}
        \toprule[1.2pt]
        \multirow{2}{*}[-0.5ex]{Model} & \multirow{2}{*}[-0.5ex]{Method} & \multicolumn{7}{c}{Batch Size} \\
        \cmidrule{3-9}
        \multicolumn{2}{c}{} & \multicolumn{1}{c}{1} & \multicolumn{1}{c}{2} & \multicolumn{1}{c}{4} & \multicolumn{1}{c}{8} & \multicolumn{1}{c}{16} & \multicolumn{1}{c}{32} & \multicolumn{1}{c}{64} \\
        \midrule
        \multirow{3}{*}{\makecell[c]{Llama~2~7B\\(14.8\%)}}
        & RS & 99.4\tiny{$\pm$0.0} & 98.8\tiny{$\pm$0.2} & 98.1\tiny{$\pm$0.1} & 97.5\tiny{$\pm$0.1} & 96.7\tiny{$\pm$0.2} & 94.7\tiny{$\pm$0.5} & 93.9\tiny{$\pm$0.6} \\
        & GUARD & 70.1\tiny{$\pm$0.3} & 55.4\tiny{$\pm$0.9} & 36.7\tiny{$\pm$1.1} & 24.7\tiny{$\pm$0.5} & 16.3\tiny{$\pm$0.5} & 13.0\tiny{$\pm$0.3} & 10.2\tiny{$\pm$0.2} \\
        & \CCG\ours{} & \CCG99.9\tiny{$\pm$0.1} & \CCG99.5\tiny{$\pm$0.5} & \CCG99.9\tiny{$\pm$0.2} & \CCG100.2\tiny{$\pm$0.2} & \CCG100.2\tiny{$\pm$0.3} & \CCG99.9\tiny{$\pm$0.4} & \CCG100.3\tiny{$\pm$0.1} \\
        \midrule
        \multirow{3}{*}{\makecell[c]{Llama~3.1~8B\\(11.6\%)}}
        & RS & 99.4\tiny{$\pm$0.1} & 98.7\tiny{$\pm$0.6} & 98.5\tiny{$\pm$0.5} & 98.2\tiny{$\pm$0.4} & 97.3\tiny{$\pm$0.2} & 96.3\tiny{$\pm$0.6} & 95.0\tiny{$\pm$0.7} \\
        & GUARD & 38.6\tiny{$\pm$0.4} & 23.9\tiny{$\pm$0.3} & 13.8\tiny{$\pm$0.3} & 8.7\tiny{$\pm$0.2} & 5.3\tiny{$\pm$0.2} & 4.0\tiny{$\pm$0.1} & 3.0\tiny{$\pm$0.1} \\
        & \CCG\ours{} & \CCG99.9\tiny{$\pm$0.0} & \CCG99.6\tiny{$\pm$0.7} & \CCG99.8\tiny{$\pm$0.3} & \CCG100.1\tiny{$\pm$0.3} & \CCG100.4\tiny{$\pm$0.5} & \CCG100.3\tiny{$\pm$0.2} & \CCG100.3\tiny{$\pm$0.5} \\
        \midrule
        \multirow{3}{*}{\makecell[c]{Qwen~2.5~7B\\(15.0\%)}}
        & RS & 100.5\tiny{$\pm$1.2} & 97.6\tiny{$\pm$0.7} & 98.5\tiny{$\pm$1.1} & 97.0\tiny{$\pm$0.3} & 95.5\tiny{$\pm$0.2} & 93.5\tiny{$\pm$0.2} & 91.5\tiny{$\pm$0.7} \\
        & GUARD & 35.6\tiny{$\pm$0.4} & 19.1\tiny{$\pm$0.3} & 11.0\tiny{$\pm$0.1} & 6.5\tiny{$\pm$0.2} & 3.6\tiny{$\pm$0.2} & 2.2\tiny{$\pm$0.0} & 1.5\tiny{$\pm$0.2} \\
        & \CCG\ours{} & \CCG100.7\tiny{$\pm$1.3} & \CCG100.2\tiny{$\pm$0.8} & \CCG101.4\tiny{$\pm$0.0} & \CCG100.8\tiny{$\pm$0.4} & \CCG100.6\tiny{$\pm$0.5} & \CCG101.3\tiny{$\pm$1.3} & \CCG100.8\tiny{$\pm$0.7} \\
        \midrule
        \multirow{3}{*}{\makecell[c]{Falcon~7B\\(14.6\%)}}
        & RS & 100.9\tiny{$\pm$1.4} & 98.3\tiny{$\pm$0.7} & 96.5\tiny{$\pm$1.4} & 97.7\tiny{$\pm$0.6} & 96.0\tiny{$\pm$0.1} & 93.7\tiny{$\pm$1.1} & 90.3\tiny{$\pm$4.2} \\
        & GUARD & 53.5\tiny{$\pm$0.3} & 35.2\tiny{$\pm$0.3} & 20.1\tiny{$\pm$0.6} & 12.8\tiny{$\pm$0.2} & 7.4\tiny{$\pm$0.3} & 4.9\tiny{$\pm$0.1} & 3.3\tiny{$\pm$0.1} \\
        & \CCG\ours{} & \CCG101.0\tiny{$\pm$1.5} & \CCG99.6\tiny{$\pm$0.7} & \CCG100.6\tiny{$\pm$0.6} & \CCG100.7\tiny{$\pm$0.2} & \CCG100.6\tiny{$\pm$0.4} & \CCG100.0\tiny{$\pm$1.4} & \CCG 98.7\tiny{$\pm$3.8} \\
        \midrule
        \multirow{3}{*}{\makecell[c]{Falcon~3~7B\\(17.0\%)}} 
        & RS & 100.7\tiny{$\pm$1.3} & 98.3\tiny{$\pm$0.7} & 98.1\tiny{$\pm$0.8} & 97.0\tiny{$\pm$0.4} & 96.1\tiny{$\pm$1.3} & 95.0\tiny{$\pm$2.6} & 92.9\tiny{$\pm$2.7} \\
        & GUARD & 36.5\tiny{$\pm$0.3} & 20.8\tiny{$\pm$0.4} & 12.2\tiny{$\pm$0.5} & 7.0\tiny{$\pm$0.4}& 4.0\tiny{$\pm$0.2} & 2.5\tiny{$\pm$0.2} & 1.8\tiny{$\pm$0.1} \\
        & \CCG\ours{} & \CCG100.8\tiny{$\pm$1.5} & \CCG100.4\tiny{$\pm$0.7} & \CCG100.6\tiny{$\pm$1.1} & \CCG101.4\tiny{$\pm$0.9} & \CCG101.8\tiny{$\pm$1.4} & \CCG102.0\tiny{$\pm$2.2} & \CCG103.6\tiny{$\pm$4.2} \\
        \midrule
        \multirow{3}{*}{\makecell[c]{Phi-4~14B\\(16.6\%)}} 
        & RS & 100.5\tiny{$\pm$1.0} & 98.5\tiny{$\pm$0.9} & 99.3\tiny{$\pm$0.5} & 99.1\tiny{$\pm$1.0} & 99.1\tiny{$\pm$0.4} & 97.4\tiny{$\pm$2.2} & 96.3\tiny{$\pm$0.7} \\
        & GUARD & 61.2\tiny{$\pm$0.6} & 37.9\tiny{$\pm$0.6} & 24.4\tiny{$\pm$0.3} & 15.1\tiny{$\pm$0.3} & 9.2\tiny{$\pm$0.4} & 5.9\tiny{$\pm$0.5} & 4.3\tiny{$\pm$0.1} \\
        & \CCG\ours{} & \CCG100.3\tiny{$\pm$0.9} & \CCG99.3\tiny{$\pm$1.1} & \CCG100.5\tiny{$\pm$3.1} & \CCG101.4\tiny{$\pm$1.4} & \CCG101.7\tiny{$\pm$0.7} & \CCG101.5\tiny{$\pm$1.2} & \CCG99.9\tiny{$\pm$0.2} \\
        \bottomrule[1.2pt]
    \end{tabular}
\end{table*}

\subsection{Main Results}
\label{NCO:main:subsec:main_results}

\paragraph{Profanity suppression.}
Table~\ref{NCO:main:tab:profanity_main_results} reports relative throughput on the
profanity suppression task.
The violation rate shown below each model name indicates that the
unconstrained base models generate forbidden strings at non-negligible rates.
All constrained methods reduce violations under the evaluated string
constraints, but they differ in decoding efficiency.
As the batch size increases, rejection sampling and GUARD show noticeable
throughput degradation.
This trend reflects their sequential constraint handling procedures.
Rejection sampling may repeatedly sample and check candidate tokens, and GUARD
relies on trie-based suppression 
that becomes increasingly expensive under batched generation.
In contrast, \ours{} maintains compact Aho-Corasick states 
and computes token-level masks using precomputed transition tables.
As a result, \ours{} preserves relative throughput 
close to the unconstrained base model across batch sizes.
The advantage of \ours{} is most visible in large batch settings.
Compared with rejection sampling, 
\ours{} achieves roughly five to six percentage points higher relative throughput.
These results show that \ours{} can enforce a large set of finite hard constraints 
with little overhead while retaining the benefits of batched decoding.

\paragraph{PII suppression.}
Table~\ref{NCO:main:tab:pii_main_results} reports relative throughput 
on the PII suppression task.
The base model violation rates shown below the model names are high
because the evaluated models are trained on the Enron Email Dataset.
The PII task is more challenging than profanity suppression because the
constraints are specified as regex patterns rather than finite hard constraints.
GUARD handles only a finite-string approximation of the original regex
constraints, but still obtains little benefit from batching.
Its throughput drops substantially at larger batch sizes 
due to sequential trie-based constraint handling.
Rejection sampling can enforce these constraints 
by checking sampled continuations, 
but its sequential candidate rejection procedure again becomes less efficient 
as the batch size grows.
In contrast, \ours{} maintains stable relative throughput 
across batch sizes and remains close to unconstrained base model 
in most settings.
This behavior supports the main design goal of \ours{}.
The method moves constraint processing into parallel mask computation 
and precomputed automaton transitions, 
which makes it suitable for batched deployment 
with multiple negative regex constraints.

\begin{table*}[ht]
    \centering
    % \small
    \caption{
        Relative throughput on the PII suppression task.
        Each value is normalized 
        by the throughput of the corresponding unconstrained base model 
        at the same batch size.
        The violation rate of each unconstrained base model is shown below the model name.
        The Enron trained models are used to induce high unconstrained PII violation rates.
    }
    \label{NCO:main:tab:pii_main_results}
    \begin{tabular}{ccrrrrr}
        \toprule[1.2pt]
        \multirow{2}{*}[-0.5ex]{Model} & \multirow{2}{*}[-0.5ex]{Method} & \multicolumn{5}{c}{Batch Size} \\
        \cmidrule{3-7}
        \multicolumn{2}{c}{} & \multicolumn{1}{c}{1} & \multicolumn{1}{c}{2} & \multicolumn{1}{c}{4} & \multicolumn{1}{c}{8} & \multicolumn{1}{c}{16} \\
        \midrule
        \multirow{3}{*}{\makecell[c]{GPT-J~6B\\(87.0\%)}}
        & RS & 99.2\footnotesize{$\pm$0.1} & 98.6\footnotesize{$\pm$0.2} & 98.0\footnotesize{$\pm$0.3} & 97.3\footnotesize{$\pm$0.2} & 95.7\footnotesize{$\pm$0.3} \\
        & GUARD & 59.1\footnotesize{$\pm$0.2} & 48.2\footnotesize{$\pm$0.2} & 31.5\footnotesize{$\pm$0.1} & 22.1\footnotesize{$\pm$0.1} & 16.8\footnotesize{$\pm$0.1}\\
        & \CCG\ours{} & \CCG99.3\footnotesize{$\pm$0.1} & \CCG99.3\footnotesize{$\pm$0.2} & \CCG99.1\footnotesize{$\pm$0.5} & \CCG99.3\footnotesize{$\pm$0.3} & \CCG99.4\footnotesize{$\pm$0.2} \\
        \midrule
        \multirow{3}{*}{\makecell[c]{GPT-Neo~2.7B\\(81.8\%)}}
        & RS & 100.7\footnotesize{$\pm$3.7} & 96.9\footnotesize{$\pm$0.4} & 96.3\footnotesize{$\pm$0.2} & 95.0\footnotesize{$\pm$0.1} & 93.0\footnotesize{$\pm$0.3} \\
        & GUARD & 40.9\footnotesize{$\pm$2.0} & 29.6\footnotesize{$\pm$0.1} & 19.4\footnotesize{$\pm$0.0} & 13.4\footnotesize{$\pm$0.1} & 10.0\footnotesize{$\pm$0.0} \\
        & \CCG\ours{} & \CCG100.5\footnotesize{$\pm$3.7} & \CCG98.5\footnotesize{$\pm$0.3} & \CCG99.0\footnotesize{$\pm$0.2} & \CCG98.9\footnotesize{$\pm$0.2} & \CCG99.0\footnotesize{$\pm$0.2}\\
        \midrule
        \multirow{3}{*}{\makecell[c]{Pythia~6.9B\\(82.2\%)}}
        & RS & 100.0\footnotesize{$\pm$1.1} & 97.5\footnotesize{$\pm$0.3} & 96.6\footnotesize{$\pm$2.3} & 98.7\footnotesize{$\pm$1.4} & 94.4\footnotesize{$\pm$1.6} \\
        & GUARD & 63.6\footnotesize{$\pm$0.2} & 45.8\footnotesize{$\pm$0.8} & 31.1\footnotesize{$\pm$0.2} & 25.5\footnotesize{$\pm$0.3} & 20.5\footnotesize{$\pm$0.6} \\
        & \CCG\ours{} & \CCG100.0\footnotesize{$\pm$1.1} & \CCG97.9\footnotesize{$\pm$0.9} & \CCG99.3\footnotesize{$\pm$0.4} & \CCG99.8\footnotesize{$\pm$2.5} & \CCG102.2\footnotesize{$\pm$2.1} \\
        \bottomrule[1.2pt]
    \end{tabular}
\end{table*}

\subsection{Ablation Study}
\label{NCO:main:subsec:ablation}
We conduct ablation studies 
to isolate the contribution of each acceleration component in \ours{}. 
For finite hard constraints, we compare the full method with variants 
that remove Aho-Corasick failure links or BPE-based transition precomputation.
Removing failure links reduces the method to plain trie-style matching 
and increases decoding overhead under many forbidden strings. 
Removing BPE-based precomputation mainly affects the one-time setup cost, 
since token transitions must be computed with less reuse.
For regex constraints, we ablate the suffix-state precomputation 
and GPU-based mask aggregation. 
The suffix-state table accounts for matches 
that begin inside a newly appended token, 
which is necessary for substring-level regex avoidance under subword tokenization. 
GPU-based aggregation combines masks from many active DFA states in parallel 
and reduces the runtime cost of enforcing multiple regex constraints.

Table~\ref{NCO:main:tab:ablation} reports the ablation results. 
Implementation details for each ablation variant are provided 
in Appendix~\ref{NCO:app:subsec:ablation_implementation_details}.
The finite hard constraint ablations separate runtime and preprocessing effects. 
Failure links are important during decoding 
because they avoid repeated suffix searches under many forbidden strings. 
BPE-based transition precomputation mainly reduces setup time 
by reusing transitions of previously merged tokens.
The regex constraint ablations show a different bottleneck. 
GPU-based mask aggregation reduces the cost of combining 
many active DFA states. 
Suffix-reachable state precomputation plays a complementary role. 
It moves the ambiguity of matches starting inside a subword token 
from decoding time to setup time 
by storing suffix-reachable states in advance. 
Together, these results indicate that the efficiency gain of \ours{} 
does not come from a single optimization.
The acceleration components reduce different costs 
in the constraint  enforcement pipeline, 
which explains why their benefits are complementary.
This trade-off is favorable when the same constraint set is reused 
across many generations, as in typical safety-filter deployments.

\begin{table*}[t!]
\centering
\caption{
Ablation study of \ours{} under finite hard constraints
and regex constraints settings.
We report absolute throughput and precomputation time at batch size $1$. Across all ablation variants, the violation rate remained at $0$.
}
\label{NCO:main:tab:ablation}
\begin{tabular}{lllrr}
\toprule[1.2pt]
Constraint Type & Model & Method & Throughput~($\uparrow$) & Pre. Time~($\downarrow$) \\
\midrule
\multirow{6}{*}{Lexicon} & \multirow{3}{*}{Llama~2~7B} & \CCG\ours{} & \CCG50.29\small{$\pm$0.04} & \CCG29.66\small{$\pm$0.44} \\
& & ~~w/o AC Trie & 35.29\small{$\pm$0.14} & 0.00\small{$\pm$0.00} \\
& & ~~w/o BPE Precomp. & 50.31\small{$\pm$0.02} & 47.03\small{$\pm$1.09} \\
\cmidrule{2-5}
& \multirow{3}{*}{Llama~3.1~8B} & \CCG\ours{} & \CCG44.94\small{$\pm$0.03} & \CCG126.95\small{$\pm$0.78} \\
& & ~~w/o AC Trie & 17.34\small{$\pm$0.15} & 0.00\small{$\pm$0.00} \\
& & ~~w/o BPE Precomp. & 44.95\small{$\pm$0.03} & 214.49\small{$\pm$1.61} \\
\midrule
\multirow{6}{*}{Regexes} & \multirow{3}{*}{GPT-J~6B} & \CCG\ours{} & \CCG53.87\small{$\pm$0.04} & \CCG1.91\small{$\pm$0.04} \\
& & ~~w/o Suffix Map & 53.68\small{$\pm$0.05} & 0.76\small{$\pm$0.07} \\
& & ~~w/o MatMul & 53.24\small{$\pm$0.09} & 1.87\small{$\pm$0.00} \\
\cmidrule{2-5}
& \multirow{3}{*}{GPT-Neo~2.7B} & \CCG\ours{} & \CCG106.66\small{$\pm$0.08} & \CCG1.91\small{$\pm$0.04} \\
& & ~~w/o Suffix Map & 105.96\small{$\pm$0.11} & 0.83\small{$\pm$0.00} \\
& & ~~w/o MatMul & 103.81\small{$\pm$0.08} & 1.91\small{$\pm$0.06} \\
\bottomrule[1.2pt]
\end{tabular}
\end{table*}

\paragraph{Additional analyses.}

We include additional experiments that complement the main results.
Absolute throughput measurements 
in Appendix~\ref{NCO:app:subsec:absolute_throughput_main_tasks}
show that the relative trends are not artifacts of normalization 
across models and batch sizes.
Quality and violation evaluations 
in Appendix~\ref{NCO:app:subsec:quality_and_violation_analysis}
confirm that hard masking with \ours{} removes constraint violations 
while keeping perplexity close to the base model.
Sensitivity analyses in Appendix~\ref{NCO:app:subsec:sensitivity_analysis}
further show that the decoding overhead remains small 
as the forbidden lexicon grows, strings become longer, or regex DFAs become larger.
We also evaluate a soft constraint mode 
in Appendix~\ref{NCO:app:subsec:soft_negative_constraints},
where invalid continuations receive finite logit penalties 
rather than being fully masked.
Across these additional settings, \ours{} consistently preserves the main 
efficiency advantage observed in the primary experiments.
The quality results also show that exact constraint enforcement does not require 
a substantial degradation in language modeling behavior.
The soft constraint results further suggest that the same framework can be used 
when applications prefer suppression over hard blocking.
These results indicate that \ours{} supports both strict enforcement 
and probabilistic suppression through the same online constraint states.

\section{Conclusion}

We present \ours{}, a decoding-time plug-in 
for preventing forbidden strings and regex patterns from appearing 
as substrings in LLM outputs. 
\ours{} avoids the state explosion of a global avoidance automaton 
by maintaining compact online matching states. 
It uses an Aho-Corasick trie for finite hard constraints 
and parallel DFA simulation for regex constraints, 
with token-level precomputation to reduce repeated decoding-time work.
Across profanity suppression and PII suppression tasks, 
\ours{} achieves zero constraint violations 
while preserving throughput close to unconstrained decoding. 
The method remains compatible with standard decoding strategies
because it intervenes only at the logit level. 
The soft logit setting also allows the same constraint representation 
to provide graded suppression instead of only hard blocking.
While this work is limited to explicit finite hard constraints and regex constraints,
future work may extend negative constraint enforcement to richer formal
specifications, such as grammar-level constraints. 
Dynamic constraint updates are another important direction, 
especially for deployment settings where finite hard constraints change frequently.

\bibliography{example_paper}
\bibliographystyle{plainnat}

%%%%%%%%%%%%%%%%%%%%%%%%%%%%%%%%%%%%%%%%%%%%%%%%%%%%%%%%%%%%%%%%%%%%%%%%%%%%%%%
% APPENDIX
%%%%%%%%%%%%%%%%%%%%%%%%%%%%%%%%%%%%%%%%%%%%%%%%%%%%%%%%%%%%%%%%%%%%%%%%%%%%%%%

\appendix
\section{Preliminaries}\label{NCO:app:sec:preliminaries}
We briefly review the background relevant to our method, 
including regular languages, BPE tokenization, 
the closure properties of regular languages under intersection and complement, 
and the Aho-Corasick algorithm.

\subsection{Regular Languages}
A regular language~\citep{Sipser1996} is a set of strings 
over a finite alphabet that can be described using operations 
such as concatenation, union, and the Kleene star. 
Regular languages admit two standard representations, 
regular expressions and finite automata, 
also known as finite state machines~(FSM).
These two formalisms are equivalent in expressive power, 
so the class of languages representable by regular expressions is exactly 
the class of languages recognizable by finite automata.
In this paper, we use the term regular expression in the formal language sense,
excluding non-regular extensions that appear in some practical regex engines.
A finite automaton is formally defined as a 5-tuple $(Q, \Sigma, \delta, s, F)$,
where $Q$ is a finite set of states, 
$\Sigma$ is a finite alphabet, 
$s \in Q$ is the start state, 
$F \subseteq Q$ is the set of accepting states, 
and $\delta: Q \times \Sigma \to 2^Q$ is the transition function. 
Throughout this paper, 
regex constraints are restricted to this formal setting, 
which ensures convertibility to DFA, 
consistent with prior work~\citep{DeutschUR2019,KooLH2024}.

\subsection{Byte-pair Encoding}
Byte-pair encoding~(BPE) is a widely used subword tokenization algorithm 
for language models. 
BPE tokenizes text into a sequence of tokens 
by iteratively applying learned merge rules 
that combine frequent adjacent symbol pairs. 
Starting from a base alphabet of symbols, 
BPE training produces an ordered list of merge operations. 
The tokenization procedure applies these rules 
to obtain a segmentation of an input string into subword tokens.
The key property of BPE that we exploit in this work is 
its hierarchical composition structure. 
Each token $w \in \mathcal{V}$ can be viewed as 
the result of merging two shorter symbols, written as
\begin{equation}
w = u \cdot v,
\end{equation}
where $u$ and $v$ are either base symbols or previously merged tokens, 
and $\cdot$ denotes string concatenation. 
As a result, each token can be associated with an implicit binary merge tree.
This structure makes it possible to compute token-level quantities recursively 
from their left and right parts. 
When a quantity depends only on the string represented by a token 
and can be composed from its two children, 
intermediate results can be reused across tokens 
that share substructures, enabling efficient precomputation.

\subsection{Intersection and Complement of Regular Languages}
Regular languages are closed under Boolean operations 
including intersection and complement.
These closure properties are useful for expressing complex constraints, 
but the corresponding automaton constructions can be expensive 
when they are carried out explicitly.
We first recall the intersection construction for DFAs.
Given two DFAs
\[
    A_1 = (Q_1,\Sigma,\delta_1,s_1,F_1)
    \qquad
    \text{and}
    \qquad
    A_2 = (Q_2,\Sigma,\delta_2,s_2,F_2),
\]
a DFA for $L(A_1)\cap L(A_2)$ can be obtained by the product construction.
Its state space is $Q_1\times Q_2$, 
and therefore has size at most $|Q_1||Q_2|$.
For $l$ constraints, the same construction yields a product state space of size
\[
    |Q_1|\cdot |Q_2|\cdots |Q_l|.
\]
Thus, even if each individual automaton is small, 
explicitly combining many constraints can lead to exponential growth 
in the number of constraints.

Complementation is simple once a complete DFA has been constructed.
For a complete DFA
\[
    A=(Q,\Sigma,\delta,s,F),
\]
where $\delta(q,c)$ is defined for every $q\in Q$ and every $c\in\Sigma$,
the complement is obtained by replacing $F$ with $Q\setminus F$.
Thus, complementing a complete DFA does not itself 
increase the number of states.
However, this operation is directly valid only for complete DFAs.
If the automaton is partial, missing transitions must first be completed,
typically by adding a sink state.
If the constraint is given as a regular expression or NFA, 
determinization may also be required before complementing it.

The state growth becomes more severe for negative substring constraints.
For a forbidden language $L(A)$, 
invalid outputs are strings that contain some string in $L(A)$ as a substring.
This language is
\[
    \Sigma^* L(A) \Sigma^*.
\]
The leading $\Sigma^*$ means 
that a match may start at any position in the generated text.
A nondeterministic automaton can represent this compactly 
by starting a possible match at each position.
A deterministic automaton, however, must summarize all currently possible
partial matches in a single state.

For example, let $\Sigma=\{a,b\}$ and consider the forbidden pattern
\[
    a\Sigma^n a.
\]
The substring occurrence language is
\[
    \Sigma^* a\Sigma^n a \Sigma^*.
\]
An NFA can recognize this language with $O(n)$ states 
by guessing the first $a$, reading exactly $n$ arbitrary symbols, 
and then checking whether the next symbol is $a$.
In contrast, a DFA must remember enough information about the recent history 
to know whether an $a$ occurred exactly $n+1$ positions before the current symbol.
More generally, it must track many possible match starts at once, 
which can be viewed as a subset of active NFA states.
This subset representation can require exponentially many states 
in the worst case.

Therefore, a direct construction for multiple negative regex constraints
combines two sources of growth.
Each substring occurrence language $\Sigma^*L(A_i)\Sigma^*$ may already 
require a large deterministic automaton.
Then enforcing all constraints simultaneously requires 
intersecting their complements.
This is why relying only on the closure properties of regular languages 
does not yield a practical decoding method.

\subsection{Aho-Corasick Algorithm}
The Aho-Corasick algorithm is an extension of the Knuth-Morris-Pratt (KMP)
algorithm, designed for efficient multi-pattern matching against a single input
text. This algorithm constructs a trie from a set of patterns and utilizes a
failure function, the core mechanism of KMP, to achieve efficient pattern
matching. The failure function indicates which node to transition to upon a
character mismatch, allowing the matching process to continue. Its time and
space complexity are linear with respect to the total length of the text and the
patterns. If the failure transitions are precomputed and integrated into the
transition function, the resulting structure is fundamentally equivalent to a
Deterministic Finite Automaton~(DFA).

\section{Detailed Algorithms}\label{NCO:app:sec:detailed_algorithms}
We provide detailed pseudocode for the algorithms used in \ours{}.
We first present naive implementations 
that compute token-level transition and blocking tables 
by directly simulating each token string. 
We then describe the BPE-aware precomputation used in our implementation, 
which reuses the merge structure of BPE tokenizers 
to reduce redundant computation. 
Finally, we present the matrix multiplication trick 
used to compute masks efficiently during decoding.
Throughout this section, $\Gamma$ denotes the tokenizer vocabulary, 
and each token $w\in\Gamma$ is treated as the string over $\Sigma$.
When discussing BPE merges, we write a merged token as $w=u\cdot v$, 
where $u$ and $v$ are previously constructed tokens.
We use \texttt{EOS}$\in\Gamma$ to denote the end-of-sequence token 
and treat it as a control token rather than a symbol sequence over $\Sigma$.
For a precomputed blocking mask $B$, 
the value $B[q,w]=1$ means that token $w$ must be blocked 
when the current constraint state is $q$.

\subsection{Naive Implementation}

\subsubsection{Finite Hard Constraints}

\paragraph{Precomputation.}
For forbidden hard constraints, 
the naive implementation first constructs an Aho-Corasick trie 
over the forbidden string set $P$.
Each trie state represents a matched prefix of some forbidden string, 
and the failure links allow the matcher to fall back 
to the longest suffix that is also a trie prefix. 
After the trie is constructed, we precompute two token-level tables. 
The transition table $\Delta[q,w]$ stores the trie state 
reached after reading token $w$ from state $q$.
The blocking mask $B[q,w]$ records 
whether reading $w$ from $q$ visits a forbidden state, 
in which case appending $w$ would create a forbidden substring.
This table directly characterizes 
whether $w$ belongs to the valid candidate set $\Gamma_P(Y_t)$ 
for a prefix whose current trie state is $q$.

\begin{algorithm}
\caption{Naive Precomputation for Finite Hard Constraints}
\label{NCO:app:alg:naive_string_precompute}
\begin{algorithmic}[1]
\Require Forbidden strings $P\subseteq\Sigma^*$ and tokenizer vocabulary $\Gamma$
\Ensure Token transition table $\Delta$, blocking mask $B$, and root state $q_{\mathrm{root}}$

\Lcomment{Build AC trie}
\State Build an Aho--Corasick trie $T$ from $P$
\State Let $Q$ be the set of states in $T$, and let $q_{\mathrm{root}}$ be the root state
\State Compute failure links $\mathrm{fail}$ for all states in $Q$
\State Mark state $q$ as forbidden if $q$ corresponds to a complete forbidden string
\State Propagate forbidden marks so that if $\mathrm{fail}(q)$ is forbidden, then $q$ is also forbidden

\Lcomment{Precompute token-level transitions and blocking masks}
\ForAll{$q\in Q$}
    \ForAll{$w\in\Gamma$}
        \State $q'\gets q$
        \State $B[q,w]\gets 0$
        \ForAll{characters $a$ in token string $w$}
            \While{$q'\neq q_{\mathrm{root}}$ and $q'$ has no transition labeled $a$}
                \State $q'\gets \mathrm{fail}[q']$
            \EndWhile
            \If{$q'$ has a transition labeled $a$}
                \State $q'\gets$ the child of $q'$ labeled by $a$
            \Else
                \State $q'\gets q_{\mathrm{root}}$
            \EndIf
            \If{$q'$ is forbidden}
                \State $B[q,w]\gets 1$
            \EndIf
        \EndFor
        \State $\Delta[q,w]\gets q'$
    \EndFor
\EndFor
\State Mark $B[q,\texttt{EOS}]\gets 0$ for all $q\in Q$
\State \Return $\Delta,B,q_{\mathrm{root}}$
\end{algorithmic}
\end{algorithm}

\paragraph{Decoding.}
During decoding, the AC state summarizes all information needed 
to determine whether a candidate token would create a forbidden substring.
Given the current prefix $Y_t$, 
let $q_t$ be the AC state obtained after reading $Y_t$. 
For each candidate token $w$, the precomputed mask $B[q_t,w]$ indicates 
whether $Y_t\cdot w$ would violate the hard constraint. 
\ours{} therefore masks exactly those invalid tokens 
and then applies an arbitrary decoding rule, 
such as greedy decoding, sampling, or beam search, over the filtered logits.
After a token is selected, the AC state is updated by a single table lookup.

\begin{algorithm}
\caption{Decoding with Finite Hard Constraints}
\label{NCO:app:alg:string_decode}
\begin{algorithmic}[1]
\Require Model $M_\theta$, input string $X$, tokenizer vocabulary $\Gamma$, decoding rule $\mathsf{Decode}$
\Require Token transition table $\Delta$, blocking mask $B$, and root state $q_{\mathrm{root}}$
\Ensure Output string $Y$ satisfying $\mathrm{Valid}_{P}$

\State $Y_0\gets\epsilon$
\State $q_0\gets q_{\mathrm{root}}$

\For{$t=0,1,\ldots$ until termination}
    \State $\ell_t\gets$ next-token logits from $M_\theta$ conditioned on $(X,Y_t)$
    \ForAll{$w\in\Gamma$}
        \If{$B[q_t,w]=1$}
            \State $\ell_t(w)\gets -\infty$ \Rcomment{$w\notin\Gamma_P(Y_t)$}
        \EndIf
    \EndFor
    \State $y_{t+1}\gets \mathsf{Decode}(\ell_t)$
    \State $Y_{t+1}\gets Y_t\cdot y_{t+1}$
    \State $q_{t+1}\gets \Delta[q_t,y_{t+1}]$
\EndFor
\State \Return $Y$
\end{algorithmic}
\end{algorithm}

\subsubsection{Regex Constraints}

\paragraph{Precomputation.}
For forbidden regex constraints, 
each DFA $A_i\in D$ recognizes a forbidden language $L(A_i)$. 
As in the finite hard constraint case, 
a forbidden match may start at any position in the generated output. 
However, unlike finite hard constraints handled by an AC trie, 
a general DFA does not provide failure links 
that summarize all relevant suffix matches as a single state. 
We therefore maintain the set of DFA states reachable 
by reading suffixes of the current output prefix.
Strictly speaking, a DFA has a defined transition 
for every state and every alphabet symbol. 
In our implementation, we use an equivalent partial representation 
that omits the non-accepting sink state and all transitions into it. 
Once a run enters this sink state, 
it can never reach an accepting state 
and therefore can never contribute to a future violation. 
We denote this omitted sink state by $\bot$ in the algorithms. 
Thus, when $\delta_i(q,a)$ is undefined,
we set the current state to $\bot$ and stop simulating that run.
The naive precomputation builds three token-level tables for each DFA. 
The transition table $\Delta_i[q,w]$ stores the state reached 
after reading token $w$ from DFA state $q$, 
or $\bot$ if the run enters the omitted sink state. 
The blocking mask $B_i[q,w]$ records 
whether reading $w$ from $q$ reaches an accepting state. 
Finally, the suffix-reachable set $R_i[w]$ stores the states
reachable from the initial state $s_i$ by reading any suffix of token $w$. 
This last table is needed 
because a new forbidden match may begin inside the newly appended token.

\begin{algorithm}
\caption{Naive Precomputation for Forbidden Regex Constraints}
\label{NCO:app:alg:naive_dfa_precompute}
\begin{algorithmic}[1]
\Require Forbidden DFAs $D=\{A_1,\ldots,A_m\}$ and tokenizer vocabulary $\Gamma$
\Ensure Token transitions $\{\Delta_i\}_{i=1}^{m}$, blocking masks $\{B_i\}_{i=1}^{m}$, and suffix-reachable sets $\{R_i\}_{i=1}^{m}$

\ForAll{$A_i=(Q_i,\Sigma,\delta_i,s_i,F_i)\in D$}
    \Lcomment{Precompute token transitions and blocking masks}
    \ForAll{$q\in Q_i$}
        \ForAll{$w\in\Gamma$}
            \State{Initialize $q'$ with $q$ and $B_i[q,w]$ with $0$.}
            \State{Mark $B_i[q,w]$ as $1$ if $q'$ is an accepting state.}
            \ForAll{characters $a$ in token string $w$}
                \State{Take the transition obtained by reading $a$ from $q'$.}
                \State{Set $q'$ to $\bot$ and stop scanning if no such transition exists.}
                \State{Mark $B_i[q,w]$ as $1$ if $q'$ is an accepting state.}
            \EndFor
            \State{Record the transition $\Delta_i[q,w]$ as $q'$.}
        \EndFor
    \EndFor
    \Lcomment{Precompute states reachable from suffixes of each token}
    \ForAll{$w\in\Gamma$} 
        \State{Initialize $R_i[w]$ as an empty set.}
        \ForAll{suffixes $u$ of token string $w$}
            \State Initialize $q'$ with $s_i$ and $\mathrm{accepted}$ with whether $s_i$ is accepting.
            \ForAll{characters $a$ in $u$}
                \State{Take the transition obtained by reading $a$ from $q'$.}
                \State{Set $q'$ to $\bot$ and stop scanning $u$ if no such transition exists.}
                \State{Mark $\mathrm{accepted}$ as true if the updated $q'$ is an accepting state.}
            \EndFor
            \If{$q'\neq\bot$}
                \State Add $q'$ to $R_i[w]$.
            \EndIf
            \If{$\mathrm{accepted}$ is true}
                \State{Mark $B_i[q,w]$ as $1$ for all states $q\in Q_i$.}
            \EndIf
        \EndFor
    \EndFor
    \State{Mark $B_i[q,\texttt{EOS}]$ to $0$ for all states $q\in Q_i$.}
\EndFor
\State \Return $\{\Delta_i,B_i,R_i\}_{i=1}^{m}$
\end{algorithmic}
\end{algorithm}

\paragraph{Decoding.}
During decoding, each DFA $A_i$ maintains an active state set $S_{i,t}$ 
after reading the current output prefix $Y_t$. 
Intuitively, $S_{i,t}$ represents all partial matches of 
the forbidden regex constraints 
that could be in progress at the end of $Y_t$. 
Equivalently, it contains the DFA states 
reached by running $A_i$ on suffixes of $Y_t$. 
We also keep the initial state $s_i$ active, 
because a forbidden match may start at the next generated token.
Given this active state set, a candidate token $w$ is invalid 
if reading $w$ from any active state reaches an accepting state, 
which is exactly recorded by the precomputed blocking mask $B_i[q,w]$. 
After a token is selected, the active state set is updated 
by advancing all previous active states through the selected token 
and adding states reachable by suffixes of the selected token. 
This update accounts for matches that started before the token
as well as matches that start inside the token.

\begin{algorithm}
\caption{Decoding with Forbidden Regex Constraints}
\label{NCO:app:alg:dfa_decode}
\begin{algorithmic}[1]
\Require Model $M_\theta$, input string $X$, tokenizer vocabulary $\Gamma$, decoding rule $\mathsf{Decode}$
\Require Token transitions $\{\Delta_i\}_{i=1}^{m}$, blocking masks $\{B_i\}_{i=1}^{m}$, and suffix-reachable sets $\{R_i\}_{i=1}^{m}$
\Ensure Output string $Y$ satisfying $\mathrm{Valid}_{D}$

\State $Y_0\gets\epsilon$
\ForAll{$A_i=(Q_i,\Sigma,\delta_i,s_i,F_i)\in D$}
    \State $S_{i,0}\gets\{s_i\}$
\EndFor

\For{$t=0,1,\ldots$ until termination}
    \State $\ell_t\gets$ next-token logits from $M_\theta$ conditioned on $(X,Y_t)$

    \ForAll{$w\in\Gamma$}
        \If{there exist $i$ and $q\in S_{i,t}$ such that $B_i[q,w]=1$}
            \State $\ell_t(w)\gets -\infty$ \Rcomment{$w\notin\Gamma_D(Y_t)$}
        \EndIf
    \EndFor

    \State $y_{t+1}\gets \mathsf{Decode}(\ell_t)$
    \State $Y_{t+1}\gets Y_t\cdot y_{t+1}$

    \ForAll{$A_i\in D$}
        \State $S_{i,t+1}\gets R_i[y_{t+1}]\cup\{s_i\}$
        \ForAll{$q\in S_{i,t}$}
            \If{$\Delta_i[q,y_{t+1}]\neq\bot$}
                \State $S_{i,t+1}\gets S_{i,t+1}\cup\{\Delta_i[q,y_{t+1}]\}$
            \EndIf
        \EndFor
    \EndFor
\EndFor
\State \Return $Y$
\end{algorithmic}
\end{algorithm}

\subsection{BPE Merging}

The naive precomputation computes token-level tables 
by directly simulating each token string from each constraint state. 
As a result, its cost scales linearly with the number of vocabulary tokens 
and also depends on the length of each token string. 
This becomes expensive for modern tokenizers with large vocabularies,
especially because merged BPE tokens can represent long strings.
\ours{} exploits the merge structure of BPE tokenizers to reduce this redundancy. 
Base tokens are handled by direct simulation, 
while each merged token is computed 
by composing the precomputed tables of its two children. 
This allows token-level transitions and blocking masks for merged tokens 
to be obtained by table lookup rather than 
by scanning their full string representations.

\subsubsection{Finite Hard Constraints}

\paragraph{Precomputation.}
For finite hard constraints, the BPE-aware precomputation uses 
the compositional property of AC-trie transitions. 
Suppose a merged token $w$ is formed by
concatenating two previously constructed tokens $u$ and $v$, 
so that $w = u \cdot v$. 
Reading $w$ from state $q$ is equivalent to first reading $u$ from $q$,
reaching an intermediate state $\Delta[q,u]$, 
and then reading $v$ from that intermediate state. 
Thus, the transition for $w$ can be written as
\[
    \Delta[q,w] = \Delta[\Delta[q,u],v].
\]
The blocking decision is compositional as well. 
Token $w$ is blocked from state $q$ 
if reading $u$ from $q$ already creates a forbidden match, 
or if reading $v$ from the intermediate state $\Delta[q,u]$ 
creates a forbidden match. 
Therefore,
\[
    B[q,w] = B[q,u] \lor B[\Delta[q,u],v].
\]
After base tokens are computed by direct AC simulation, 
all merged tokens can be filled by table lookup and composition.

\begin{algorithm}
\caption{BPE-Aware Precomputation for Finite Hard Constraints}
\label{NCO:app:alg:bpe_string_precompute}
\begin{algorithmic}[1]
\Require Forbidden strings $P\subseteq\Sigma^*$, tokenizer vocabulary $\Gamma$, BPE merge rules $\mathcal{R}$
\Ensure Token transition table $\Delta$, blocking mask $B$, and root state $q_{\mathrm{root}}$

\Lcomment{Build AC trie}
\State Build an Aho--Corasick trie $T$ from $P$
\State Let $Q$ be the set of states in $T$, and let $q_{\mathrm{root}}$ be the root state
\State Compute failure links $\mathrm{fail}$ for all states in $Q$
\State Mark state $q$ as forbidden if $q$ corresponds to a complete forbidden string
\State Propagate forbidden marks so that if $\mathrm{fail}(q)$ is forbidden, then $q$ is also forbidden

\Lcomment{Initialize tables for base tokens}
\State Let $\Gamma_{\mathrm{base}}$ be the tokens in $\Gamma$ that are not produced by BPE merges
\ForAll{$q\in Q$}
    \ForAll{$w\in\Gamma_{\mathrm{base}}$}
        \State{Read $w$ from $q$ in the AC trie to obtain the resulting state $q'$ and the blocking indicator $b$.}
        \State $\Delta[q,w]\gets q'$
        \State $B[q,w]\gets b$
    \EndFor
\EndFor

\Lcomment{Compose tables for merged tokens}
\ForAll{merge rule $(u,v)\mapsto w$ in BPE order}
    \ForAll{$q\in Q$}
        \State $q'\gets \Delta[q,u]$
        \State $\Delta[q,w]\gets \Delta[q',v]$
        \State $B[q,w]\gets B[q,u]\lor B[q',v]$
    \EndFor
\EndFor

\State Mark $B[q,\texttt{EOS}]\gets 0$ for all $q\in Q$
\State \Return $\Delta,B,q_{\mathrm{root}}$
\end{algorithmic}
\end{algorithm}

\subsubsection{Regex Constraints}

\paragraph{Precomputation.}
For regex constraints, the BPE-aware precomputation must compose 
both token transitions and suffix-reachable sets. 
Let a merged token $w$ be formed by concatenating 
two previously constructed tokens $u$ and $v$, so that $w=u\cdot v$.
For a DFA $A_i$, reading $w$ from state $q$ is equivalent to
first reading $u$ from $q$ and then reading $v$ from the resulting state. 
Thus, when $\Delta_i[q,u]\neq\bot$, the transition can be computed as
\[
    \Delta_i[q,w] = \Delta_i[\Delta_i[q,u],v].
\]
If $\Delta_i[q,u]=\bot$, then the run has already entered 
the omitted non-accepting sink state, so $\Delta_i[q,w]=\bot$.
The blocking decision is composed in the same way. 
Token $w$ is blocked from state $q$ 
if reading $u$ from $q$ reaches an accepting state, 
or if reading $v$ from the intermediate state $\Delta_i[q,u]$
reaches an accepting state.
Hence, when $\Delta_i[q,u]\neq\bot$,
\[
    B_i[q,w] = B_i[q,u] \lor B_i[\Delta_i[q,u],v].
\]
When $\Delta_i[q,u]=\bot$, 
we only keep the blocking decision from reading $u$, 
since the remaining part cannot lead back from the omitted sink state.
In addition to transitions and blocking masks, 
regex constraints require suffix-reachable sets. 
The suffixes of $w=u\cdot v$ are either suffixes of $v$, 
or strings obtained by taking a suffix of $u$ and then appending $v$.
Therefore,
\[
    R_i[w]
    =
    R_i[v]
    \cup
    \{\Delta_i[q,v] : q\in R_i[u],\ \Delta_i[q,v]\neq\bot\}.
\]
This recurrence allows \ours{} to compute merged-token tables using only previously
computed tables for $u$ and $v$.

\begin{algorithm}
\caption{BPE-Aware Precomputation for Forbidden Regex Constraints}
\label{NCO:app:alg:bpe_dfa_precompute}
\begin{algorithmic}[1]
\Require Forbidden DFAs $D=\{A_1,\ldots,A_m\}$, tokenizer vocabulary $\Gamma$, BPE merge rules $\mathcal{R}$
\Ensure Token transitions $\{\Delta_i\}_{i=1}^{m}$, blocking masks $\{B_i\}_{i=1}^{m}$, and suffix-reachable sets $\{R_i\}_{i=1}^{m}$

\ForAll{$A_i=(Q_i,\Sigma,\delta_i,s_i,F_i)\in D$}
    \State Let $\Gamma_{\mathrm{base}}$ be the tokens in $\Gamma$ that are not produced by BPE merges

    \Lcomment{Initialize tables for base tokens}
    \ForAll{$q\in Q_i$}
        \ForAll{$w\in\Gamma_{\mathrm{base}}$}
            \State{Read $w$ from state $q$ in $A_i$ to obtain the resulting state $q'$ and the blocking indicator $b$.}
            \State{Set $\Delta_i[q,w]\gets q'$ and $B_i[q,w]\gets b$.}
        \EndFor
    \EndFor

    \Lcomment{Initialize suffix-reachable sets for base tokens}
    \ForAll{$w\in\Gamma_{\mathrm{base}}$}
        \State{Scan all suffixes of $w$ from $s_i$ to compute $R_i[w]$ and the suffix blocking indicator $b_{\mathrm{suf}}$.}
        \If{$b_{\mathrm{suf}}=1$}
            \State{Mark $B_i[q,w]$ as $1$ for all states $q\in Q_i$.}
        \EndIf
    \EndFor

    \Lcomment{Compose tables for merged tokens}
    \ForAll{merge rule $(u,v)\mapsto w$ in BPE order}
        \ForAll{$q\in Q_i$}
            \State{Let $q'$ be $\Delta_i[q,u]$, the state obtained by reading $u$ from $q$.}
            \State{Set $\Delta_i[q,w]$ to $\bot$ and inherit $B_i[q,w]$ from $B_i[q,u]$ if $q'=\bot$.}
            \State{Otherwise, set $\Delta_i[q,w]$ to $\Delta_i[q',v]$ and $B_i[q,w]$ to $B_i[q,u]\lor B_i[q',v]$.}
        \EndFor

        \State{Initialize $R_i[w]$ with $R_i[v]$.}
        \ForAll{$q\in R_i[u]$}
            \State{Add $\Delta_i[q,v]$ to $R_i[w]$ if $\Delta_i[q,v]\neq\bot$.}
        \EndFor

        \If{$B_i[s_i,v]=1$ or there exists $q\in R_i[u]$ such that $B_i[q,v]=1$}
            \State{Mark $B_i[q,w]$ as $1$ for all states $q\in Q_i$.}
        \EndIf
    \EndFor
    \State Mark $B_i[q,\texttt{EOS}]$ as $0$ for all $q\in Q_i.$
\EndFor
\State \Return $\{\Delta_i,B_i,R_i\}_{i=1}^{m}$
\end{algorithmic}
\end{algorithm}

\subsection{MatMul Trick}

The naive DFA decoding algorithm checks every candidate token 
against every active DFA state. 
This directly implements the validity condition, 
but it can be inefficient when the vocabulary is large 
or when multiple DFA constraints are used. 
\ours{} accelerates this step 
by representing active states and blocking masks as matrices. 
This converts the token-blocking computation into a dense matrix multiplication 
that can be executed efficiently on GPUs.

Let $D=\{A_1,\ldots,A_m\}$ be the set of forbidden DFAs. 
We concatenate the state spaces of all DFAs 
into a single global state index set
\[
    Q_{\mathrm{all}} = Q_1 \sqcup \cdots \sqcup Q_m,
\]
where states are re-indexed so that the sets are disjoint. 
At decoding step $t$, let
\[
    \mathbf{s}_t \in \{0,1\}^{|Q_{\mathrm{all}}|}
\]
be the active-state vector, 
where $\mathbf{s}_t[q]=1$ indicates that global state $q$ is active. 
We also concatenate the precomputed blocking masks into a matrix
\[
    \mathbf{B}\in\{0,1\}^{|Q_{\mathrm{all}}|\times |\Gamma|},
\]
where $\mathbf{B}[q,w]=1$ means that token $w$ is blocked 
when state $q$ is active. 
Then the blocking vector over the vocabulary is computed by
\[
    \mathbf{b}_t = \mathbf{s}_t^\top \mathbf{B}.
\]
A token $w$ is invalid if $\mathbf{b}_t[w] > 0$.
In practice, \ours{} adds a large negative value to the logits of these tokens. 
This gives the same result as iterating over all active states 
and taking the union of their blocked tokens,
but exposes the computation as a GPU-friendly matrix multiplication.

The suffix-reachable sets are also represented as a matrix
\[
    \mathbf{R}\in\{0,1\}^{|\Gamma|\times |Q_{\mathrm{all}}|}, 
\]
where $\mathbf{R}[w,q]=1$ indicates that state $q$ is 
reachable from the initial state of its DFA 
by reading a suffix of token $w$.
For batched decoding, the active state vectors for all sequences are stacked
into a matrix $\mathbf{S}_t\in\{0,1\}^{B\times |Q_{\mathrm{all}}|}$. 
The blocked token matrix is then computed as
\[
    \mathbf{H}_t = \mathbf{S}_t\mathbf{B},
\]
where $\mathbf{H}_t\in\mathbb{R}^{B\times |\Gamma|}$. 
Tokens with positive entries in $\mathbf{H}_t$ are masked 
for the corresponding batch element. 
This batched form is the one used in our implementation.

\paragraph{Decoding.}
Algorithm~\ref{NCO:app:alg:dfa_decode_matmul} gives 
the matrix-based decoding procedure for regex constraints. 
Compared to Algorithm~\ref{NCO:app:alg:dfa_decode}, 
the validity filtering step is replaced by a matrix multiplication 
between the active-state vector and the blocking mask matrix. 
The active-state update remains the same. 
For each selected token, \ours{} adds states reachable from suffixes of the token 
and advances the previous active states through the token. 
In beam search, the active-state vectors are reordered 
according to the selected beam indices in the same way as the model cache.

\begin{algorithm}
\caption{MatMul-Based Decoding with Forbidden Regex Constraints}
\label{NCO:app:alg:dfa_decode_matmul}
\begin{algorithmic}[1]
\Require Model $M_\theta$, input string $X$, tokenizer vocabulary $\Gamma$, decoding rule $\mathsf{Decode}$
\Require Concatenated transition table $\Delta$, blocking matrix $\mathbf{B}$, suffix-reachable matrix $\mathbf{R}$, initial-state vector $\mathbf{s}_{\mathrm{init}}$
\Ensure Output string $Y$ satisfying $\mathrm{Valid}_{D}$

\State $Y_0\gets\epsilon$
\State $\mathbf{s}_0\gets \mathbf{s}_{\mathrm{init}}$

\For{$t=0,1,\ldots$ until termination}
    \State $\ell_t\gets$ next-token logits from $M_\theta$ conditioned on $(X,Y_t)$

    \Lcomment{Compute blocked tokens by matrix multiplication}
    \State $\mathbf{b}_t\gets \mathbf{s}_t^\top\mathbf{B}$
    \ForAll{$w\in\Gamma$}
        \If{$\mathbf{b}_t[w]>0$}
            \State $\ell_t(w)\gets -\infty$ \Rcomment{$w\notin\Gamma_D(Y_t)$}
        \EndIf
    \EndFor

    \State $y_{t+1}\gets \mathsf{Decode}(\ell_t)$
    \State $Y_{t+1}\gets Y_t\cdot y_{t+1}$

    \Lcomment{Update active states}
    \State $\mathbf{s}_{t+1}\gets \mathbf{R}[y_{t+1}]\lor \mathbf{s}_{\mathrm{init}}$
    \ForAll{active states $q$ with $\mathbf{s}_t[q]=1$}
        \If{$\Delta[q,y_{t+1}]\neq\bot$}
            \State $\mathbf{s}_{t+1}[\Delta[q,y_{t+1}]]\gets 1$
        \EndIf
    \EndFor
\EndFor
\State \Return $Y$
\end{algorithmic}
\end{algorithm}

\subsection{Compatibility with Other Decoding Strategies}

\ours{} is designed as a logit processor 
for standard generation pipelines.
At each step, it modifies only the token scores by masking invalid candidates.
The subsequent token selection can therefore be performed 
by the original decoding strategy, including greedy decoding, 
top-$k$ sampling, top-$p$ sampling, temperature sampling, and beam search.
No model training or parameter update is required.

The same constraint structure is reused across batch elements and beams.
For batched decoding, each sequence stores its own constraint state,
while the precomputed transition and mask tables are shared across the batch.
For finite hard constraints, each sequence or beam stores 
only its current Aho-Corasick trie state. 
For regex constraints, each sequence or beam stores
its active DFA state sets. 
When a beam is expanded, the corresponding constraint state is copied together 
with the generated prefix and updated after a new token is selected. 
This makes \ours{} compatible with batched decoding and beam search
without rebuilding the constraint structures at every step.

In beam search, local logit masking can distort normalized probabilities
by removing probability mass from invalid tokens. 
\ours{} reduces this effect by using the mask only as a feasibility filter. 
Beam scores are accumulated
using the original model log probabilities of the selected valid tokens.
This preserves constraint satisfaction 
while reducing unnecessary distortion in comparisons across beams.

\section{Proof and Runtime Analysis}
\label{NCO:app:sec:proof_and_runtime_analysis}
\subsection{Correctness}
\subsubsection{Multiple Finite Hard Constraints}

\thmStringCorrectness*

\begin{proof}
Let $Y_t$ be the output prefix after $t$ decoding steps, 
and let $q_t$ be the state of the Aho-Corasick automaton after reading $Y_t$.
We use the standard invariant of the Aho-Corasick automaton.
The state $q_t$ represents the longest suffix of $Y_t$ 
that is also a prefix of some forbidden string in $P$.
Moreover, while reading an additional string $w$ from $q_t$, 
the automaton reaches an output state 
if and only if 
some forbidden string in $P$ is completed as a substring of $Y_t w$.

By construction, \ours{} masks a token $w$ 
whenever reading the string represented by $w$ from $q_t$ 
reaches an output state.
Therefore, every unmasked token $v$ satisfies the following property.
If $Y_t$ contains no forbidden string, 
then $Y_t w$ also contains no forbidden string.

We prove the theorem by induction on $t$.
The empty prefix $Y_0$ contains no forbidden string.
Assume that $Y_t$ contains no forbidden string.
The next token $w_t$ is sampled only from the unmasked tokens.
Hence, reading $w_t$ from $q_t$ does not reach an output state.
By the invariant above, no forbidden string is completed while reading $w_t$.
Since $Y_t$ was valid by the induction hypothesis, 
the updated prefix $Y_{t+1}=Y_t w_t$ also contains no forbidden string.
Thus every generated prefix is valid, 
and the final output contains no string in $P$ as a substring.
\end{proof}

\subsubsection{Multiple Regex Constraints}

\thmRegexCorrectness*

\begin{proof}
Fix an automaton $A_i$.
For an output prefix $Y_t$, 
let $S_t^i\subseteq Q_i$ be the active state set maintained by \ours{}.
We prove the invariant
\[
    S_t^i
    =
    \{\delta_i(s_i,z) \mid z \text{ is a suffix of } Y_t\}.
\]
That is, $S_t^i$ contains exactly the states reached by running $A_i$ 
on every suffix of the current output prefix.

The invariant holds for $Y_0=\epsilon$, 
since the only suffix of the empty string is $\epsilon$, 
and the corresponding state is $s_i$.
Suppose the invariant holds for $Y_t$.
After appending a token $w$, 
every suffix of $Y_t w$ is of one of two forms.
It either extends a suffix of $Y_t$ by $w$, 
or it starts inside the newly appended token $w$.
The first case is captured 
by applying the token transition to every state in $S_t^i$.
The second case is captured 
by the precomputed suffix transitions of $w$ from the start state $s_i$.
Therefore, the update rule of \ours{} preserves the invariant for
$S_{t+1}^i$.

Now consider the token mask.
For each automaton $A_i$, 
\ours{} masks a token $w$ if reading $w$ 
from some state in $S_t^i$, or reading a suffix of $w$ from $s_i$, 
reaches an accepting state in $F_i$.
By the invariant, this is exactly the condition 
that some substring of $Y_t w$ accepted by $A_i$ is completed 
while reading $w$.
The final mask is obtained by combining the masks of all automata, 
so a token is allowed only when it is valid for every $A_i$.

We prove correctness by induction on $t$.
The empty prefix contains no substring accepted by any automaton in $D$.
Assume that $Y_t$ satisfies all regex constraints.
Let $w_t$ be the next token selected by \ours{}.
Since $w_t$ is unmasked, for every $A_i$, 
appending $w_t$ does not complete an accepted substring. 
Indeed, no run starting from an active state in $S_{i,t}$ 
reaches an accepting state while reading $w_t$, 
and no run starting inside $w_t$ is accepted.
Thus no substring accepted by $A_i$ is completed in $Y_t w_t$.
Since no such substring existed in $Y_t$ by the induction hypothesis, the new
prefix $Y_{t+1}=Y_t w_t$ also satisfies all regex constraints.
Therefore, every generated prefix and the final output contain no substring
accepted by any forbidden DFA.
\end{proof}

\subsection{Runtime Analysis}

\subsubsection{Multiple Finite Hard Constraints}
\paragraph{Precomputation for finite hard constraints.}
Let $P$ be a finite set of forbidden strings, 
let $L$ denote the total length of all strings in $P$, that is
\[
    L=\sum_{w\in P}|w|,
\] 
and let $|\Gamma|$ be the vocabulary size.
For multiple hard constraints, 
\ours{} constructs the Aho-Corasick trie in $O(L)$ time 
and performs BPE based token level precomputation in $O(|\Gamma|L)$ time.

The Aho-Corasick trie contains one node 
for each distinct prefix of a forbidden string.
Thus, the number of trie states is $O(L)$.
The trie and its failure links can therefore be constructed in $O(L)$ time 
over a fixed alphabet.
After the trie is constructed, \ours{} precomputes token level transitions.
For each trie state and each vocabulary token, 
the table stores the state reached after reading that token and 
whether a forbidden state is encountered during the transition.
Without using the BPE structure, computing this table 
by scanning each token from every trie state 
would introduce an additional factor depending on the maximum token length.
With the BPE based dynamic programming procedure, 
the transition of a merged token is computed 
by composing the transitions of its two children.
Therefore, each vocabulary token is processed once per trie state, 
giving $O(|\Gamma|L)$ token level precomputation time.

\paragraph{Inference for finite hard constraints.}
Let $N$ be the number of generated tokens.
For multiple forbidden strings, after preprocessing, 
\ours{} updates the constraint state in $O(1)$ time per generated token 
and $O(N)$ time over the full generated sequence, 
excluding the dense operation of applying the mask to the vocabulary logits.

During inference, \ours{} maintains a single Aho-Corasick state 
for each generated sequence.
Once a token is selected, the next constraint state is obtained 
by one lookup in the precomputed token transition table.
Therefore, the state update cost is $O(1)$ per generated token 
and $O(N)$ over $N$ generated tokens.
The invalid token set for the current state is also available 
from the precomputed mask table.
Thus, generating the constraint mask does not require 
scanning the generated text or traversing the trie at inference time.
If the dense operation of applying the mask to all vocabulary logits is counted,
it adds the usual $O(|\Gamma|)$ vector operation per decoding step.

\subsubsection{Multiple Regex Constraints}
\paragraph{Precomputation for regex constraints.}
Let $D=\{A_1,\ldots,A_l\}$ be a finite set of forbidden DFAs, 
where $A_i=(Q_i,\Sigma,\delta_i,s_i,F_i)$, and let $M$ denote the total number of states across all automata in $D$, that is
\[
    M=\sum_{i=1}^{l}|Q_i|.
\]
For multiple regex constraints, 
\ours{} performs token level transition precomputation 
in $O(M|\Gamma|)$ time using the BPE based composition, 
and stores the resulting tables additively across automata.

For each DFA $A_i$, \ours{} constructs a token level transition table 
that stores the result of reading each vocabulary token 
from each state of $A_i$.
Using the BPE based composition, 
the transition of a merged token is computed
from the transitions of its two children.
Thus, for automaton $A_i$, the token level transition table is computed 
in $O(|Q_i||\Gamma|)$ time.
Summing this cost over all automata gives
\[
    \sum_{i=1}^{l} O(|Q_i||\Gamma|)
    =
    O(M|\Gamma|).
\]
The tables are stored separately or concatenated across automata,
so the storage and preprocessing cost are additive 
in the number of automaton states.
This avoids the product state space 
that would arise from explicitly intersecting the automata.

\paragraph{Inference for regex constraints.}
Let $D=\{A_1,\ldots,A_l\}$ be a finite set of forbidden DFAs 
and let $M=\sum_{i=1}^{l}|Q_i|$.
After preprocessing, \ours{} updates the regex constraint state 
in $O(M)$ time per generated token, 
excluding GPU parallel mask composition, 
and avoids constructing the explicit product automaton 
whose state space may have size
\[
    \prod_{i=1}^{l}|Q_i|.
\]

During inference, \ours{} maintains an active state set for each DFA.
For automaton $A_i$, the update after a generated token has two components.
First, previous active states are propagated 
by the precomputed token transition.
Second, states reachable from the start state 
by suffixes of the generated token are added, 
since a forbidden substring may start inside the newly generated token.
Both components are obtained from precomputed tables.
Under the active state representation, 
updating the state information for $A_i$ is linear in $|Q_i|$.
Summing over all automata gives
\[
    \sum_{i=1}^{l} O(|Q_i|)=O(M)
\]
time per generated token.
The token masks are computed from the active state representation 
and the precomputed forbidden mask tables.
When written as a dense operation, 
this mask composition has arithmetic size proportional to $M|\Gamma|$, 
but \ours{} performs it as a GPU parallel operation.
The sequential state update therefore scales additively 
with the total number of states.
In contrast, a direct construction for simultaneous enforcement 
would intersect the constraint automata.
The resulting product state space has size
\[
    |Q_1|\cdot |Q_2|\cdots |Q_l|.
\]
Therefore, \ours{} avoids the explicit product automaton 
and maintains only the separate active state representations.

\section{Experimental Details}
\label{NCO:app:sec:experimental_details}

We provide the experimental setup used throughout the paper, 
including datasets, constraint sets, models, metrics, baseline implementations, 
ablation variants, decoding hyperparameters, and the measurement environment. 
All methods are evaluated with the same prompts, models, 
and decoding settings unless stated otherwise. 
Additional results in Appendix~\ref{NCO:app:sec:additional_experimental_results} follow 
the same setup.

\subsection{Dataset and Constraint Details}
\label{NCO:app:subsec:dataset_constraint_details}

\paragraph{RTP profanity suppression.}
For the finite hard constraint task, we use
RealToxicityPrompts~\citep{GehmanGSCS2020}~(RTP). 
We use the English subset of the 
LDNOOBW~\footnote{\url{https://github.com/LDNOOBW/List-of-Dirty-Naughty-Obscene-and-Otherwise-Bad-Words}} 
bad-word lexicon as the forbidden string set. 
The lexicon is used in full without additional filtering or manual modification. 
RTP provides prompts paired with reference continuations. 
We select prompts whose reference continuation contains 
at least one substring matched by the LDNOOBW lexicon, 
and sample 500 prompts for evaluation.
This filtering step creates an evaluation set 
where the prompt context is likely to elicit profanity-like continuations 
from an unconstrained model.
Each selected prompt is formatted with the native chat template 
provided by the corresponding model tokenizer before generation. 
The model then generates a continuation, 
and the generated response is considered a violation 
if it contains any lexicon entry as a substring.

This task evaluates 
whether a decoding method can prevent explicit forbidden strings 
from appearing anywhere in the generated output. 
The setting is not equivalent to static token blocking, 
since a forbidden word may be split across multiple BPE tokens. 
A valid decoding method must therefore track partial matches 
across token boundaries.

\paragraph{Enron PII suppression.}
For the forbidden regex constraint task, 
we use the Enron Email Dataset~\citep{KlimtY2004}. 
We consider four common types of PII patterns, 
including email addresses, phone numbers, social security numbers, 
and credit card numbers. 
Each PII type is specified as a regular expression 
and compiled into a DFA before decoding.
The exact regex patterns are listed in Table~\ref{NCO:app:tab:pii_regex_constraints}.
Since the models used for this task are trained on the Pile, 
which includes the Enron corpus, 
prompts extracted from Enron emails  are likely 
to elicit continuations containing PII-like patterns. 
We first scan the dataset with our PII regex constraints. 
For each matched email, 
we take the text preceding the first matched span as the prompt. 
We keep prompts whose input length is between 20 and 2000 characters, 
and sample 500 prompts for evaluation. 
Given an Enron prompt, the model generates a continuation, 
and the generated response is considered a violation 
if it contains any substring that matches one of the PII regex constraints.

This task evaluates 
whether a decoding method can enforce negative regex constraints 
rather than a finite list of forbidden strings. 
The regex constraints represent structured pattern families 
with many possible instances, 
which makes enumerating all forbidden strings impractical.

\begin{table}[ht]
\centering
\caption{PII regex constraints used in the Enron experiment.}
\label{NCO:app:tab:pii_regex_constraints}
\begin{tabular}{ll}
\toprule
PII type & Regex pattern \\
\midrule
Email address & \verb|[a-zA-Z0-9._%+-]+@[a-zA-Z0-9.-]+\.[a-zA-Z]{2,}| \\
Phone number & \verb|[0-9]{3}-[0-9]{2}-[0-9]{4}| \\
Social security number & \verb|[0-9]{3}-[0-9]{3}-[0-9]{4}| \\
Credit card number & \verb|[0-9]{4}-[0-9]{4}-[0-9]{4}-[0-9]{4}| \\
\bottomrule
\end{tabular}
\end{table}

\subsection{Model Details}

We evaluate models from several families to test whether \ours{} remains
effective across different tokenizers and model architectures. 
For RTP profanity suppression, we use six instruction or chat-style models:
Llama~2~7B, Llama~3.1~8B, Qwen2.5~7B, Falcon~7B, Falcon3~7B, and Phi4~14B.
For Enron PII suppression, we use EleutherAI models trained 
on the Pile~\citep{GaoBBGHFPHTNPL2020},
including GPT-J~6B, GPT-Neo~2.7B, and Pythia~6.9B. 
The Pile includes the Enron Email Dataset, 
which makes these models likely to produce PII-like continuations 
under prompts extracted from Enron emails.
This property is useful for evaluating 
whether decoding-time constraints can remove structured PII patterns 
when the unconstrained model assigns them non-trivial probability.
We include GPT-family models for this task because prior work has commonly used
them in PII suppression settings, and because unconstrained generations from
these models produce PII-like patterns with non-trivial frequency.

All models are used only at inference time. 
We do not fine-tune, retrain, or modify model parameters for any method. 
For chat-style models, prompts are formatted with the native chat template
provided by the corresponding model tokenizer before generation.

\subsection{Decoding Hyperparameters}
Unless otherwise stated, all hard-constraint experiments use greedy decoding
with a maximum generation length of 256 tokens.
We vary the batch size for the main throughput experiments 
and use batch size $1$ for ablation experiments.
We do not use temperature sampling, top-$p$ sampling, top-$k$ sampling, 
or repetition penalty in these experiments. 
The same decoding hyperparameters are used for the base model, 
rejection sampling, GUARD, and \ours{}.

The soft logit experiments use sampling-based decoding 
instead of greedy decoding. 
We use the default sampling hyperparameters specified 
by the corresponding model generation configuration 
for temperature, top-$p$, and top-$k$. 
No repetition penalty is applied. 
The soft logit penalty is denoted by $\lambda > 0$. 
At each decoding step, tokens that would complete a forbidden string or regex
match receive an additive logit shift of $-\lambda$. 
The hard constraint setting corresponds to $\lambda=\infty$.

\subsection{Baseline Implementation Details}
\label{NCO:app:subsec:baseline_implementation_details}

We compare \ours{} against the unconstrained base model, 
rejection sampling, and GUARD. 
The base model uses the same prompts and decoding hyperparameters 
without applying any negative constraint.
For rejection sampling, we maintain the matching state 
for the current partial generation 
and check whether appending a sampled token would create a forbidden
string or a substring matching one of the regex constraints. 
If the sampled token causes a violation, we reject it, 
restore the matching state to the state before token selection, 
mask the rejected token, and resample within the same decoding step. 
This implementation avoids an additional forward inference pass 
for each rejected token, 
since resampling is performed from the already computed logits.

For GUARD, 
we reimplement the trie-based token suppression part 
that corresponds to finite hard constraints,
since we could not find an official public implementation.
Our implementation represents the forbidden string set as a trie 
and suppresses candidate tokens 
whose continuation would complete a forbidden string.
We do not reproduce GUARD's prompt classifier 
or SBERT-based semantic soft matching module, 
since our evaluation focuses on explicit negative substring constraints 
rather than semantic unlearning targets.
For the RTP task, 
GUARD is evaluated with the same forbidden substring constraints 
used by rejection sampling and \ours{}.
For the Enron PII task, 
GUARD cannot directly enforce the original regex constraints 
used by rejection sampling and \ours{}.
The regex constraints in Table~\ref{NCO:app:tab:pii_regex_constraints} 
define infinite sets of forbidden strings, 
whereas GUARD requires a finite forbidden lexicon to construct its trie.
We therefore build a finite-string approximation for GUARD.
Specifically, we first apply the same regex constraints 
used by rejection sampling and \ours{} 
to the $500$ evaluation instances sampled from Enron.
We then collect all matched PII spans into a finite lexicon 
and construct GUARD's trie from this lexicon.
This approximation allows GUARD to be evaluated on the PII task, 
but it is not equivalent to enforcing the original regex constraints.
This distinction reflects a limitation of trie-based finite-string suppression 
when the target constraint denotes an infinite set of strings 
described by a regex.

\subsection{Ablation Implementation Details}
\label{NCO:app:subsec:ablation_implementation_details}

We implement four ablation variants in total, 
two for finite hard constraints and two for regex constraints. 
Each variant removes one component from \ours{} 
while keeping the same constraint set, prompt, model, 
and decoding hyperparameters.

For finite hard constraints, 
the first ablation removes Aho-Corasick failure transitions. 
Without failure transitions, 
the decoder can no longer maintain a single deterministic matching state 
that summarizes the longest relevant suffix of the generated prefix. 
A direct implementation would need to track all possible match start positions, 
similar to the active-state representation used for regex constraints, 
which is expensive for large forbidden lexicons. 
We therefore implement this ablation with a reversed trie. 
At each decoding step, 
the method inspects the suffix of the current generated text in reverse order 
and checks whether appending a candidate token would complete any forbidden string. 
This variant preserves the same string-constraint semantics, 
but it removes the constant-time state update 
provided by Aho-Corasick failure transitions. 
The comparison therefore isolates the decoding-time benefit of failure links 
against a practical trie-based alternative.
The second string-constraint ablation removes 
BPE-based transition precomputation. 
Instead of computing token-level transitions 
by reusing the merge structure of the tokenizer, 
this variant computes the transition and mask information 
for each vocabulary token independently. 
This isolates the effect of BPE-aware precomputation on setup time.

For regex constraints, the first ablation removes the BPE-based speedup 
for suffix-reachable state precomputation. 
This table stores the DFA states reachable from the start state 
by suffixes of each vocabulary token, 
which is needed to detect matches that begin inside a newly appended token. 
Without the BPE-based speedup, 
these suffix-reachable states are computed independently for each token, 
increasing the one-time preprocessing cost.
The second regex-constraint ablation removes 
mask aggregation based on matrix multiplication. 
This variant computes the token mask 
by iterating over active DFA states and
combining their token-level validity masks sequentially, 
rather than using parallel matrix operations on the GPU. 
This isolates the contribution of GPU-parallel mask aggregation 
to decoding throughput.

\subsection{Evaluation Metrics}

We evaluate generation quality, constraint effectiveness, and
practical efficiency, using the following metrics.
For hard-constraint experiments, decoding is deterministic 
because we use greedy decoding under a fixed model, prompt set, and constraint set. 
As a result, perplexity and violation rate do not vary across repeated runs. 
We therefore report standard deviations only for time-related measurements.

\paragraph{Perplexity.}
We report perplexity as a proxy for generation quality. 
All perplexity scores are computed after generation 
using Llama~3.1~8B as a standard evaluation model, 
so that outputs from different decoding methods are compared 
under a common likelihood model. 
Given a generated sequence of length $T$ with token probabilities
$p_\theta(y_t \mid y_{<t}, x)$ under the evaluation model, 
perplexity is computed as
\begin{equation}
\mathrm{PPL} = \exp\left(-\frac{1}{T}\sum_{t=1}^{T}\log p_\theta(y_t \mid y_{<t}, x)\right).
\end{equation}
Lower perplexity indicates that the generated tokens are, on average, more
probable under the evaluation model.

\paragraph{Violation Rate.}
We measure constraint adherence using violation rate, 
defined as the fraction of generated responses 
that contain at least one forbidden pattern. 
Let $N$ be the total number of generated responses 
and let $N_{\text{v}}$ be the number of responses that violate 
at least one constraint. 
The violation rate is computed as
\begin{equation}
\mathrm{Violation\ Rate} = \frac{N_{\text{v}}}{N} \times 100.
\end{equation}
Lower violation rate indicates better compliance with the negative constraints.
Violation is determined by exact pattern matching on the generated text. 
For finite hard constraints, we check 
whether each forbidden string appears in the output 
using Python string containment. 
For regular-expression constraints, 
we use the \texttt{re} library 
to test whether the output matches any forbidden pattern.

\paragraph{Throughput.}
We measure decoding efficiency using throughput, 
defined as the number of generated tokens per second. 
For a run that generates $K$ total tokens 
and takes wall-clock time $\Delta t$ seconds
during the generation phase, 
including constraint checking and masking, 
throughput is computed as
\begin{equation}
\mathrm{Throughput} = \frac{K}{\Delta t}~(\text{tokens/s}).
\end{equation}
We refer to this value as absolute throughput.
For each method $m$ and batch size $b$, 
we also compute relative throughput as
\begin{equation}
\mathrm{Relative\ Throughput}
=\frac{T_{m,b}}{T_{\mathrm{Base},b}}\times 100,
\end{equation}
where $T_{m,b}$ is the absolute throughput of method $m$ 
at batch size $b$, 
and $T_{\mathrm{Base},b}$ is the absolute throughput 
of the base unconstrained model at the same batch size.
The Base model therefore has relative throughput $100.0$ 
by definition.
We report relative throughput in the main paper 
to compare runtime overhead across models and batch sizes.
For each setting, we repeat the throughput measurement three times 
and report the mean and standard deviation.

\subsection{Experimental Environment}
We ran all experiments on machines 
equipped with AMD Ryzen Threadripper~9960X CPUs 
and NVIDIA RTX PRO~6000 Blackwell Max-Q GPUs. 
The experiments were conducted on Rocky Linux~9.6 using Python~3.10.19, 
PyTorch~2.9.0, Transformers~4.57.3, scikit-learn~1.7.2, Datasets~4.4.1, 
Interegular~0.3.3, and Accelerate~1.11.0.

\subsection{Licenses for Existing Assets}
\label{NCO:app:ssec:license}

Table~\ref{NCO:app:tab:asset_licenses} summarizes the existing assets used in
our experiments. We use these assets only for offline evaluation and cite their
original sources in the main paper or appendix. We do not release any new dataset
or model checkpoint as part of this work.

\begin{table}[ht]
\centering
\small
\caption{Licenses and terms of use for existing assets. 
    We list all datasets, pretrained models, and software dependencies 
    used in our experiments, 
    and report the corresponding license or usage terms 
    for each asset when publicly available.
}
\label{NCO:app:tab:asset_licenses}
\begin{tabularx}{\linewidth}{lY}
\toprule[1.2pt]
\multicolumn{2}{l}{\textbf{Dataset}} \\
\midrule
Asset & License / terms \\
\midrule
RealToxicityPrompts~\citep{GehmanGSCS2020} & Apache-2.0. \\
Enron Email Dataset~\citep{KlimtY2004} & Publicly released but license not specified. \\
Alpaca~\citep{TaoriGZDLGLH2023} & CC BY-NC 4.0 \\
RegexPSPACE~\citep{JinHH2025} & Publicly released but license not specified. \\
LDNOOBW~\citep{LDNOOBW} & CC BY 4.0. \\
\midrule[1.2pt]
\multicolumn{2}{l}{\textbf{Model}} \\
\midrule
Asset & License / terms \\
\midrule
Llama~2~7B~\citep{Touvron2023Llama2} & Llama 2 Community License. \\
Llama~3.1~8B~\citep{Dubey2024Llama3} & Llama 3.1 Community License. \\
Qwen2.5~7B~\citep{Qwen2024Qwen25} & Apache-2.0. \\
Falcon~7B~\citep{Almazrouei2023Falcon} & Apache-2.0. \\
Falcon3~7B~\citep{TII2024Falcon3} & TII Falcon-LLM License 2.0. \\
Phi4~14B~\citep{Microsoft2024Phi4} & MIT License. \\
GPT-J~6B~\citep{WangKomatsuzaki2021GPTJ} & Apache-2.0. \\
GPT-Neo~2.7B~\citep{Black2021GPTNeo} & MIT License. \\
Pythia~6.9B~\citep{Biderman2023Pythia} & Apache-2.0. \\
\midrule[1.2pt]
\multicolumn{2}{l}{\textbf{Software}} \\
\midrule
Asset & License / terms \\
\midrule
Python~3.10.19 & PSF License Version 2. \\
PyTorch~2.9.0 & BSD-3-Clause. \\
Transformers~4.57.3 & Apache-2.0. \\
scikit-learn~1.7.2 & BSD-3-Clause. \\
Datasets~4.4.1 & Apache-2.0. \\
Interegular~0.3.3 & MIT License. \\
Accelerate~1.11.0 & Apache-2.0. \\
\bottomrule[1.2pt]
\end{tabularx}
\end{table}

\section{Additional Experimental Results}
\label{NCO:app:sec:additional_experimental_results}

\subsection{Absolute Throughput Results on Main Tasks}
\label{NCO:app:subsec:absolute_throughput_main_tasks}

We report absolute throughput results 
for the main profanity suppression and PII suppression tasks.
These results complement the relative throughput results 
reported in the main paper.
The same two datasets and nine models are used.
For each setting, we report the throughput of the Base model, 
rejection sampling, GUARD, and \ours{}.
For the RTP profanity suppression task, 
we evaluate batch sizes from 1 to 64.
For the Enron PII suppression task, 
we evaluate batch sizes from 1 to 16.
Each batch size is chosen from powers of two.
For each model and task, 
we increase the batch size up to the largest value 
that fits in memory.
Enron uses smaller maximum batch sizes than RTP 
because its prompts are longer and require more memory during generation.

Tables~\ref{NCO:app:tab:profanity_main_results_absolute}
and~\ref{NCO:app:tab:pii_main_results_absolute} 
show the absolute throughput results.
The results show that \ours{} preserves most of the throughput 
of the base unconstrained model across batch sizes.
Rejection sampling incurs larger overhead 
when invalid continuations are sampled more frequently.
GUARD also shows noticeable throughput degradation under batched decoding 
because its trie based restriction procedure does not scale 
as efficiently as the precomputed masking strategy used by \ours{}.
These results support the relative throughput trends 
reported in the main paper.

\begin{table*}[ht]
    \centering
    \small
    \caption{
        Absolute throughput on the RTP profanity suppression task.
        The table reports Base, rejection sampling, GUARD, and \ours{}.
        Batch sizes range from 1 to 64.
        RS denotes rejection sampling.
    }
    \setlength{\tabcolsep}{4.3pt}
    \label{NCO:app:tab:profanity_main_results_absolute}
    \begin{tabular}{ccrrrrrrr}
        \toprule[1.2pt]
        \multirow{2}{*}[-0.5ex]{Model} & \multirow{2}{*}[-0.5ex]{Method} & \multicolumn{7}{c}{Batch Size} \\
        \cmidrule{3-9}
        \multicolumn{2}{c}{} & \multicolumn{1}{c}{1} & \multicolumn{1}{c}{2} & \multicolumn{1}{c}{4} & \multicolumn{1}{c}{8} & \multicolumn{1}{c}{16} & \multicolumn{1}{c}{32} & \multicolumn{1}{c}{64} \\
        \midrule
        \multirow{4}{*}{\makecell[c]{Llama~2~7B\\(14.8\%)}}
        & Base & 50.3\tiny{$\pm$0.0} & 89.5\tiny{$\pm$0.1} & 176.4\tiny{$\pm$0.5} & 309.1\tiny{$\pm$0.5} & 516.2\tiny{$\pm$2.8} & 751.7\tiny{$\pm$5.1} & 977.6\tiny{$\pm$7.6} \\
        & RS & 50.0\tiny{$\pm$0.0} & 88.4\tiny{$\pm$0.3} & 173.1\tiny{$\pm$0.6} & 301.5\tiny{$\pm$0.4} & 499.1\tiny{$\pm$1.9} & 711.9\tiny{$\pm$3.5} & 918.0\tiny{$\pm$5.7} \\
        & GUARD & 35.3\tiny{$\pm$0.1} & 49.6\tiny{$\pm$0.8} & 64.7\tiny{$\pm$1.8} & 76.5\tiny{$\pm$1.4} & 84.2\tiny{$\pm$2.5} & 98.0\tiny{$\pm$1.5} & 100.0\tiny{$\pm$1.6} \\
        & \CCG\ours{} & \CCG50.3\tiny{$\pm$0.0} & \CCG89.1\tiny{$\pm$0.5} & \CCG176.3\tiny{$\pm$0.2} & \CCG309.7\tiny{$\pm$1.0} & \CCG517.4\tiny{$\pm$1.5} & \CCG750.9\tiny{$\pm$2.3} & \CCG980.1\tiny{$\pm$8.6} \\
        \midrule
        \multirow{4}{*}{\makecell[c]{Llama~3.1~8B\\(11.6\%)}}
        & Base & 45.0\tiny{$\pm$0.0} & 81.9\tiny{$\pm$0.4} & 159.3\tiny{$\pm$0.5} & 263.9\tiny{$\pm$0.8} & 457.2\tiny{$\pm$1.4} & 676.4\tiny{$\pm$5.6} & 908.8\tiny{$\pm$7.3} \\
        & RS & 44.7\tiny{$\pm$0.0} & 80.8\tiny{$\pm$0.1} & 156.9\tiny{$\pm$0.4} & 259.1\tiny{$\pm$0.2} & 444.7\tiny{$\pm$0.5} & 651.1\tiny{$\pm$5.3} & 862.9\tiny{$\pm$7.1} \\
        & GUARD & 17.3\tiny{$\pm$0.2} & 19.6\tiny{$\pm$0.4} & 22.0\tiny{$\pm$0.5} & 23.0\tiny{$\pm$0.6} & 24.0\tiny{$\pm$1.2} & 27.2\tiny{$\pm$0.5} & 27.0\tiny{$\pm$0.4} \\
        & \CCG\ours{} & \CCG44.9\tiny{$\pm$0.0} & \CCG81.6\tiny{$\pm$0.2} & \CCG158.9\tiny{$\pm$0.0} & \CCG264.1\tiny{$\pm$0.1} & \CCG458.8\tiny{$\pm$1.5} & \CCG678.2\tiny{$\pm$5.7} & \CCG911.6\tiny{$\pm$4.5} \\
        \midrule
        \multirow{4}{*}{\makecell[c]{Qwen~2.5~7B\\(15.0\%)}}
        & Base & 46.1\tiny{$\pm$0.6} & 85.5\tiny{$\pm$0.6} & 166.3\tiny{$\pm$1.7} & 311.6\tiny{$\pm$4.1} & 569.9\tiny{$\pm$5.2} & 946.6\tiny{$\pm$5.7} & 1401.5\tiny{$\pm$39.4} \\
        & RS & 46.3\tiny{$\pm$0.0} & 83.5\tiny{$\pm$0.4} & 163.8\tiny{$\pm$0.7} & 302.2\tiny{$\pm$4.8} & 544.1\tiny{$\pm$5.9} & 885.5\tiny{$\pm$3.6} & 1282.3\tiny{$\pm$41.0} \\
        & GUARD & 16.4\tiny{$\pm$0.0} & 16.3\tiny{$\pm$0.1} & 18.3\tiny{$\pm$0.3} & 20.2\tiny{$\pm$0.9} & 20.6\tiny{$\pm$1.2} & 20.4\tiny{$\pm$0.2} & 20.7\tiny{$\pm$2.6} \\
        & \CCG\ours{} & \CCG46.4\tiny{$\pm$0.1} & \CCG85.7\tiny{$\pm$0.4} & \CCG168.7\tiny{$\pm$1.7} & \CCG314.2\tiny{$\pm$2.8} & \CCG573.2\tiny{$\pm$2.5} & \CCG959.1\tiny{$\pm$8.3} & \CCG1411.9\tiny{$\pm$29.5} \\
        \midrule
        \multirow{4}{*}{\makecell[c]{Falcon~7B\\(14.6\%)}}
        & Base & 47.7\tiny{$\pm$0.7} & 91.3\tiny{$\pm$0.1} & 176.2\tiny{$\pm$1.6} & 320.7\tiny{$\pm$2.5} & 593.4\tiny{$\pm$2.6} & 971.4\tiny{$\pm$4.3} & 1459.3\tiny{$\pm$7.0} \\
        & RS & 48.1\tiny{$\pm$0.1} & 89.8\tiny{$\pm$0.5} & 170.0\tiny{$\pm$1.0} & 313.2\tiny{$\pm$2.1}& 569.8\tiny{$\pm$2.7} & 910.4\tiny{$\pm$11.0} & 1317.3\tiny{$\pm$57.9} \\
        & GUARD & 25.5\tiny{$\pm$0.3} & 32.1\tiny{$\pm$0.3} & 35.4\tiny{$\pm$0.8} & 41.1\tiny{$\pm$0.4} & 44.2\tiny{$\pm$1.9} & 47.2\tiny{$\pm$0.9} & 48.0\tiny{$\pm$0.5} \\
        & \CCG\ours{} & \CCG48.2\tiny{$\pm$0.1} & \CCG91.0\tiny{$\pm$0.5} & \CCG177.3\tiny{$\pm$0.7} & \CCG323.0\tiny{$\pm$2.4} & \CCG596.6\tiny{$\pm$3.8} & \CCG971.5\tiny{$\pm$14.8} & \CCG1440.0\tiny{$\pm$52.5} \\
        \midrule
        \multirow{4}{*}{\makecell[c]{Falcon~3~7B\\(17.0\%)}} 
        & Base & 45.8\tiny{$\pm$0.6} & 87.2\tiny{$\pm$1.0} & 170.7\tiny{$\pm$2.0} & 316.6\tiny{$\pm$5.0} & 597.0\tiny{$\pm$10.5} & 996.7\tiny{$\pm$27.1} & 1444.8\tiny{$\pm$70.8} \\
        & RS & 46.1\tiny{$\pm$0.1} & 85.7\tiny{$\pm$0.6} & 167.4\tiny{$\pm$0.7} & 307.2\tiny{$\pm$3.5} & 573.5\tiny{$\pm$2.8} & 946.4\tiny{$\pm$3.4} & 1340.5\tiny{$\pm$28.7} \\
        & GUARD & 16.7\tiny{$\pm$0.1} & 18.2\tiny{$\pm$0.2} & 20.8\tiny{$\pm$0.6} & 22.2\tiny{$\pm$0.9} & 23.7\tiny{$\pm$1.1} & 24.9\tiny{$\pm$1.3} & 26.0\tiny{$\pm$0.8} \\
        & \CCG\ours{} & \CCG46.2\tiny{$\pm$0.0} & \CCG87.6\tiny{$\pm$0.5} & \CCG171.7\tiny{$\pm$1.5} & \CCG321.1\tiny{$\pm$2.2} & \CCG607.9\tiny{$\pm$3.2} & \CCG1016.4\tiny{$\pm$6.2} & \CCG1494.4\tiny{$\pm$15.0} \\
        \midrule
        \multirow{4}{*}{\makecell[c]{Phi-4~14B\\(16.6\%)}} 
        & Base & 23.8\tiny{$\pm$0.2} & 48.0\tiny{$\pm$0.6} & 90.9\tiny{$\pm$0.8} & 168.8\tiny{$\pm$5.5} & 298.3\tiny{$\pm$6.4} & 474.8\tiny{$\pm$4.4} & 686.4\tiny{$\pm$22.6} \\
        & RS & 23.9\tiny{$\pm$0.1} & 47.3\tiny{$\pm$0.2} & 90.3\tiny{$\pm$0.4} & 167.2\tiny{$\pm$3.7} & 295.6\tiny{$\pm$7.4} & 462.3\tiny{$\pm$11.0} & 660.7\tiny{$\pm$17.6} \\
        & GUARD & 14.5\tiny{$\pm$0.2} & 18.2\tiny{$\pm$0.2} & 22.2\tiny{$\pm$0.4} & 25.4\tiny{$\pm$1.3} & 27.3\tiny{$\pm$1.6} & 27.8\tiny{$\pm$2.2} & 29.2\tiny{$\pm$1.8} \\
        & \CCG\ours{} & \CCG23.8\tiny{$\pm$0.0} & \CCG47.7\tiny{$\pm$0.2} & \CCG91.3\tiny{$\pm$2.4} & \CCG171.1\tiny{$\pm$3.1} & \CCG303.3\tiny{$\pm$7.9} & \CCG481.9\tiny{$\pm$5.3} & \CCG685.8\tiny{$\pm$21.4} \\
        \bottomrule[1.2pt]
    \end{tabular}
\end{table*}

\begin{table*}[ht]
    \centering
    \caption{
    Absolute throughput on the Enron PII suppression task.
    The table reports Base, rejection sampling, GUARD, and \ours{}.
    Batch sizes range from 1 to 16.
    GUARD uses finite hard constraints extracted from matched PII spans.
    }
    \label{NCO:app:tab:pii_main_results_absolute}
    \begin{tabular}{ccrrrrr}
        \toprule[1.2pt]
        \multirow{2}{*}[-0.5ex]{Model} & \multirow{2}{*}[-0.5ex]{Method} & \multicolumn{5}{c}{Batch Size} \\
        \cmidrule{3-7}
        \multicolumn{2}{c}{} & \multicolumn{1}{c}{1} & \multicolumn{1}{c}{2} & \multicolumn{1}{c}{4} & \multicolumn{1}{c}{8} & \multicolumn{1}{c}{16} \\
        \midrule
        \multirow{4}{*}{\makecell[c]{GPT-J~6B\\(87.0\%)}}
        & Base & 54.2\footnotesize{$\pm$0.0} & 88.7\footnotesize{$\pm$0.1} & 170.6\footnotesize{$\pm$0.3} & 269.8\footnotesize{$\pm$0.1} & 375.6\footnotesize{$\pm$0.1} \\
        & RS & 53.8\footnotesize{$\pm$0.0} & 87.5\footnotesize{$\pm$0.1} & 167.3\footnotesize{$\pm$0.3} & 262.5\footnotesize{$\pm$0.5} & 359.3\footnotesize{$\pm$0.9} \\
        & GUARD & 32.0\footnotesize{$\pm$0.1} & 42.7\footnotesize{$\pm$0.2} & 53.8\footnotesize{$\pm$0.2} & 59.7\footnotesize{$\pm$0.4} & 63.0\footnotesize{$\pm$0.2} \\
        & \CCG\ours{} & \CCG53.9\footnotesize{$\pm$0.0} & \CCG88.1\footnotesize{$\pm$0.1} & \CCG169.2\footnotesize{$\pm$0.6} & \CCG268.0\footnotesize{$\pm$0.7} & \CCG373.4\footnotesize{$\pm$0.7} \\
        \midrule
        \multirow{4}{*}{\makecell[c]{GPT-Neo~2.7B\\(81.8\%)}}
        & Base & 106.2\footnotesize{$\pm$3.8} & 178.1\footnotesize{$\pm$0.2} & 312.0\footnotesize{$\pm$0.1} & 486.7\footnotesize{$\pm$0.2} & 677.6\footnotesize{$\pm$0.1} \\
        & RS & 106.8\footnotesize{$\pm$3.7} & 172.6\footnotesize{$\pm$0.5} & 300.5\footnotesize{$\pm$0.6} & 462.2\footnotesize{$\pm$0.5} & 630.0\footnotesize{$\pm$1.7} \\
        & GUARD & 43.4\footnotesize{$\pm$0.6} & 52.8\footnotesize{$\pm$0.3} & 60.6\footnotesize{$\pm$0.0} & 65.0\footnotesize{$\pm$0.4} & 67.6\footnotesize{$\pm$0.1} \\
        & \CCG\ours{} & \CCG106.7\footnotesize{$\pm$0.1} & \CCG175.4\footnotesize{$\pm$0.4} & \CCG308.9\footnotesize{$\pm$0.7} & \CCG481.4\footnotesize{$\pm$0.9} & \CCG670.6\footnotesize{$\pm$1.7}\\
        \midrule
        \multirow{4}{*}{\makecell[c]{Pythia~6.9B\\(82.2\%)}}
        & Base & 47.1\footnotesize{$\pm$0.5} & 91.9\footnotesize{$\pm$1.2} & 167.6\footnotesize{$\pm$0.9} & 223.8\footnotesize{$\pm$3.0} & 301.5\footnotesize{$\pm$9.3} \\
        & RS & 47.1\footnotesize{$\pm$0.1} & 89.6\footnotesize{$\pm$1.0} & 161.8\footnotesize{$\pm$4.5} & 220.8\footnotesize{$\pm$1.9} & 284.8\footnotesize{$\pm$13.5} \\
        & GUARD & 29.9\footnotesize{$\pm$0.2} & 42.1\footnotesize{$\pm$0.3} & 52.1\footnotesize{$\pm$0.1} & 57.0\footnotesize{$\pm$0.3} & 61.9\footnotesize{$\pm$0.2} \\
        & \CCG\ours{} & \CCG47.1\footnotesize{$\pm$0.0} & \CCG90.0\footnotesize{$\pm$0.5} & \CCG166.4\footnotesize{$\pm$1.0} & \CCG223.4\footnotesize{$\pm$6.6} & \CCG308.2\footnotesize{$\pm$4.1} \\
        \bottomrule[1.2pt]
    \end{tabular}
\end{table*}

\subsection{Perplexity and Violation Rate on Main Tasks}
\label{NCO:app:subsec:quality_and_violation_analysis}
Tables~\ref{NCO:app:tab:profanity_suppression_various_metric}
and~\ref{NCO:app:tab:pii_suppression_various_metric}
report perplexity and violation rates 
for the two main tasks.
These results complement the main relative throughput analysis 
and the absolute throughput results.
They show that the throughput trends observed in the main paper 
do not come from a degradation 
in constraint satisfaction or generation quality.

For RTP profanity suppression, 
the unconstrained models produce forbidden strings 
with violation rates between $11.6\%$ and $17.0\%$.
All constrained methods reduce the violation rate to $0.0\%$.
For Enron PII suppression, 
the unconstrained models show high violation rates 
across all three Pile-trained models.
Rejection sampling and \ours{} enforce the original regex constraints 
and reduce the violation rate to $0.0\%$.
GUARD cannot directly enforce the original regex constraints, 
so it is evaluated using the finite-string approximation 
described in Appendix~\ref{NCO:app:subsec:baseline_implementation_details}.
Perplexity remains at a similar level across constrained methods, 
which indicates that the constraints do not introduce 
substantial quality degradation under the evaluated settings.

\begin{table*}[th]
\centering
\caption{Quality and violation evaluation results for profanity suppression 
with finite hard constraints at batch size $1$.
All constrained methods reduce the violation rate to 0.0\%, while \ours{}
preserves throughput close to the base model across model families.}
\label{NCO:app:tab:profanity_suppression_various_metric}
\begin{tabular}{lllccc}
\toprule
Model & Size & Method & Perplexity~($\downarrow$) & Violation Rate~($\downarrow$) & Throughput~($\uparrow$) \\
\midrule
\multirow{4}{*}{Llama 2}
& \multirow{4}{*}{7B}
& Base & 1.75 & 14.80 & 50.31\small{$\pm$0.02} \\
& & Rejection Sampling & 1.75 & 0.00 & 50.00\small{$\pm$0.01} \\
& & GUARD & 1.76 & 0.00 & 35.29\small{$\pm$0.14} \\
& & \ours{} & 1.77 & 0.00 & 50.29\small{$\pm$0.04} \\
\midrule
\multirow{4}{*}{Llama 3.1}
& \multirow{4}{*}{8B}
& Base & 1.57 & 15.00 & 44.98\small{$\pm$0.02} \\
& & Rejection Sampling & 1.57 & 0.00 & 44.73\small{$\pm$0.02} \\
& & GUARD & 1.57 & 0.00 & 17.34\small{$\pm$0.15} \\
& & \ours{} & 1.57 & 0.00 & 44.94\small{$\pm$0.03} \\
\midrule
\multirow{4}{*}{Qwen2.5}
& \multirow{4}{*}{7B}
& Base & 2.18 & 17.00 & 46.06\small{$\pm$0.56} \\
& & Rejection Sampling & 2.21 & 0.00 & 46.29\small{$\pm$0.04} \\
& & GUARD & 2.19 & 0.00 & 16.37\small{$\pm$0.05} \\
& & \ours{} & 2.18 & 0.00 & 46.38\small{$\pm$0.08} \\
\midrule
\multirow{4}{*}{Falcon}
& \multirow{4}{*}{7B}
& Base & 1.65 & 11.60 & 47.72\small{$\pm$0.70} \\
& & Rejection Sampling & 1.64 & 0.00 & 48.13\small{$\pm$0.06} \\
& & GUARD & 1.64 & 0.00 & 25.52\small{$\pm$0.32} \\
& & \ours{} & 1.65 & 0.00 & 48.19\small{$\pm$0.06} \\
\midrule
\multirow{4}{*}{Falcon3}
& \multirow{4}{*}{7B}
& Base & 2.98 & 14.60 & 45.77\small{$\pm$0.63} \\
& & Rejection Sampling & 2.96 & 0.00 & 46.08\small{$\pm$0.09} \\
& & GUARD & 2.97 & 0.00 & 16.72\small{$\pm$0.13} \\
& & \ours{} & 3.12 & 0.00 & 46.15\small{$\pm$0.03} \\
\midrule
\multirow{4}{*}{Phi4}
& \multirow{4}{*}{14B}
& Base & 2.15 & 16.60 & 23.75\small{$\pm$0.19} \\
& & Rejection Sampling & 2.13 & 0.00 & 23.86\small{$\pm$0.07} \\
& & GUARD & 2.13 & 0.00 & 14.54\small{$\pm$0.19} \\
& & \ours{} & 2.17 & 0.00 & 23.81\small{$\pm$0.03} \\
\bottomrule
\end{tabular}
\end{table*}

For Enron PII suppression, the unconstrained models show high violation rates 
across all three Pile-trained models. 
GPT-J~6B, GPT-Neo~2.7B, and Pythia~6.9B all produce PII-like substrings 
in more than 80\% of responses. 
Both rejection sampling and \ours{} reduce the violation rate to 0.0\%. 
\ours{} also preserves throughput comparable to rejection sampling 
across all three models, indicating that the regex masking overhead remains small 
even when the base model frequently assigns high probability 
to invalid continuations.

\begin{table}[ht]
\centering
\caption{
Quality and violation evaluation results for PII suppression 
with forbidden regex constraints at batch size $1$.
The base models frequently generate PII patterns, while rejection sampling and
\ours{} reduce the violation rate to 0.0\% with similar throughput.}
\label{NCO:app:tab:pii_suppression_various_metric}
\begin{tabular}{lllccc}
\toprule
Model & Size & Method & Perplexity~($\downarrow$) & Violation Rate~($\downarrow$) & Throughput~($\uparrow$) \\
\midrule
\multirow{4}{*}{GPT-J}
& \multirow{4}{*}{6B}
& Base & 2.43 & 87.00 & 54.24\small{$\pm$0.03} \\
& & Rejection Sampling & 2.48 & 0.00 & 53.78\small{$\pm$0.03} \\
& & GUARD & 2.40 & 80.60 & 32.04\small{$\pm$0.09} \\
& & \ours{} & 2.52 & 0.00 & 53.87\small{$\pm$0.04} \\
\midrule
\multirow{4}{*}{GPT-Neo}
& \multirow{4}{*}{2.7B}
& Base & 2.18 & 81.80 & 106.18\small{$\pm$3.82} \\
& & Rejection Sampling & 2.22 & 0.00 & 106.78\small{$\pm$0.03} \\
& & GUARD & 2.14 & 79.40 & 43.41\small{$\pm$0.59} \\
& & \ours{} & 2.25 & 0.00 & 106.66\small{$\pm$0.08} \\
\midrule
\multirow{4}{*}{Pythia}
& \multirow{4}{*}{6.9B}
& Base & 2.09 & 82.20 & 47.09\small{$\pm$0.50} \\
& & Rejection Sampling & 2.13 & 0.00 & 47.09\small{$\pm$0.06} \\
& & GUARD & 2.11 & 81.80 & 29.93\small{$\pm$0.21} \\
& & \ours{} & 2.13 & 0.00 & 47.09\small{$\pm$0.02} \\
\bottomrule
\end{tabular}
\end{table}

\subsection{Sensitivity Analysis}
\label{NCO:app:subsec:sensitivity_analysis}

We conduct sensitivity analyses to evaluate 
whether \ours{} remains scalable 
as the constraint specification becomes larger or more complex. 
For finite hard constraints, we run the analysis using Llama~2~7B on RTP. 
We vary the number of forbidden strings from 1 to 32 
and the length of forbidden strings from 1 to 8. 
For regex constraints, we run the analysis using GPT-Neo~2.7B on Enron. 
We vary the number of DFA constraints, 
the total number of DFA states, 
and the total number of DFA edges from 1 to 32. 

The finite hard constraint experiments stress 
two different sources of constraint complexity. 
Increasing the number of forbidden strings enlarges the Aho-Corasick trie 
and increases the number of partial matches represented by the automaton. 
Increasing the forbidden string length mainly increases trie depth 
and the amount of token-level transition information needed during preprocessing. 
Once preprocessing is complete, 
\ours{} still maintains a single matching state per generated sequence. 
This keeps per-token decoding overhead small 
even when the forbidden lexicon is larger or contains longer strings.

The regex-constraint experiments focus on the structure of the compiled DFAs. 
The number of DFA constraints measures 
how many automata must be simulated in parallel. 
The total number of DFA states and edges measures the size of the automata 
after compilation. 
These quantities affect active-state updates, 
suffix-reachable state precomputation, and mask aggregation. 
The results show that the parallel DFA simulation remains practical 
as regex constraints become structurally larger.

\begin{figure}[ht]
    \centering
    \begin{subfigure}[t]{0.49\linewidth}
        \centering
        \includegraphics[width=\linewidth]{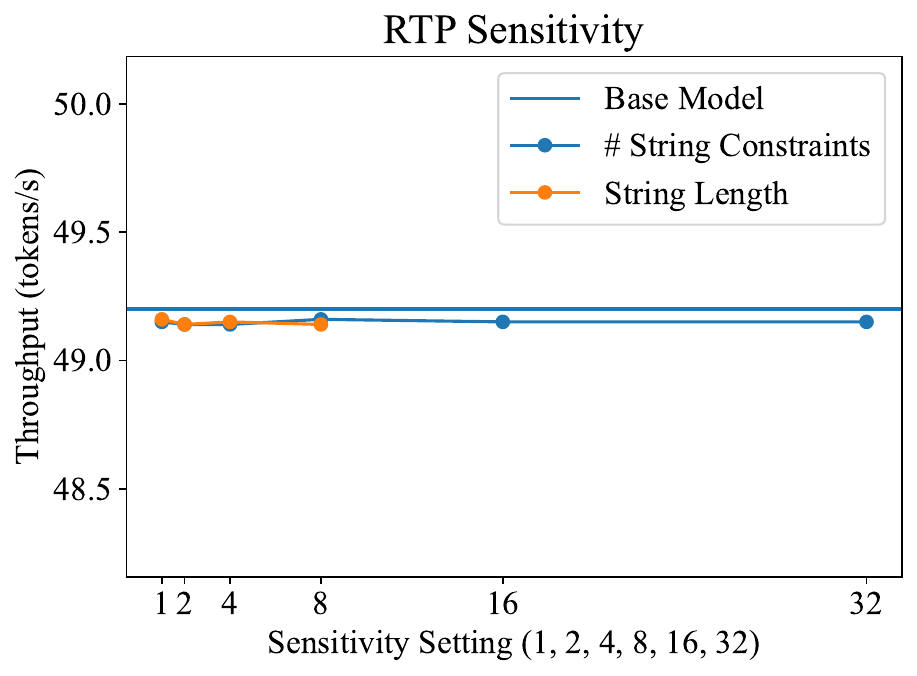}
    \end{subfigure}
    \hfill
    \begin{subfigure}[t]{0.49\linewidth}
        \centering
        \includegraphics[width=\linewidth]{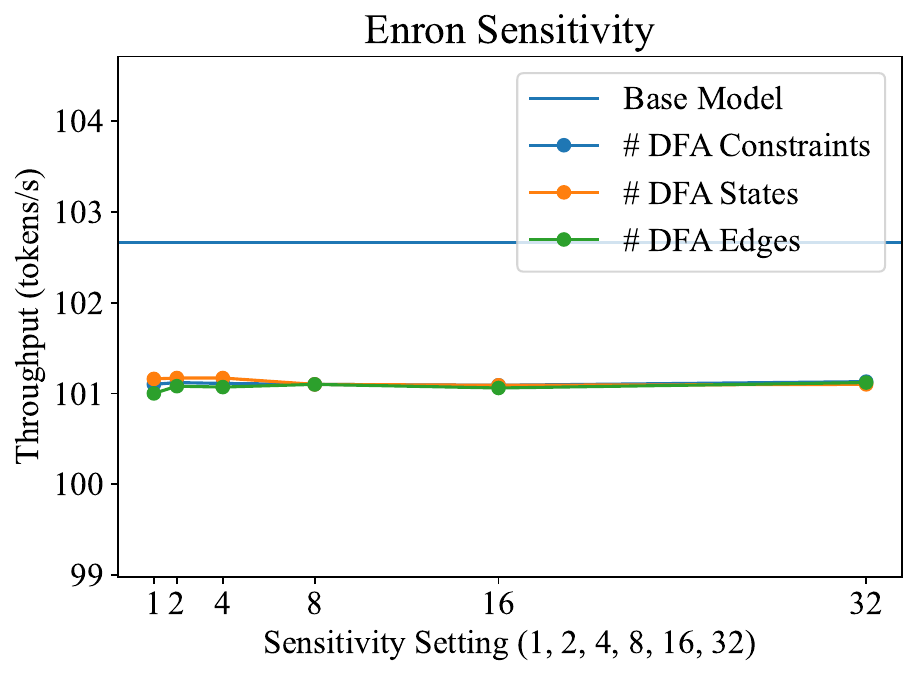}
    \end{subfigure}
    \caption{Sensitivity analysis of throughput under increasing constraint scale.}
    \label{NCO:app:fig:sensitivity_throughput}
\end{figure}

Figure~\ref{NCO:app:fig:sensitivity_throughput} reports 
the throughput of \ours{} under increasing constraint scale.
For finite hard constraints, throughput remains nearly unchanged 
as the forbidden lexicon grows in size or length. 
For regex constraints, throughput also remains stable 
as the number and size of DFAs increase. 
These results suggest that the runtime cost of \ours{} is not highly sensitive 
to the tested increases in constraint scale. 
This behavior is consistent with its design, 
where constraint-specific computation is largely moved to preprocessing 
and decoding uses compact online states and precomputed token-level masks. 
Overall, \ours{} maintains low decoding overhead across the tested settings 
while enforcing negative constraints throughout generation.

\subsection{Soft Negative Constraints}
\label{NCO:app:subsec:soft_negative_constraints}

Hard masking is appropriate when constraint satisfaction is mandatory, 
such as blocking explicit forbidden strings or structured PII patterns.
However, some applications such as style shifting may require softer control, 
where the goal is to discourage undesirable patterns 
rather than completely rule them out.
\ours{} supports this setting 
through the same online constraint states used for hard masking.
Given a penalty $\lambda > 0$, 
tokens that would complete a forbidden string or regex match 
receive an additive logit penalty of $-\lambda$ 
instead of being removed from the candidate set.
The case $\lambda=\infty$ corresponds to hard masking.

We evaluate this soft constraint mode 
as an additional analysis of the trade-off 
between constraint strength, generation quality, and throughput.
For RTP, we use Llama~3.1~8B with the same finite hard constraints 
as in the profanity suppression task.
For Enron, we use GPT-J~6B with the same regex constraints 
as in the PII suppression task.
We use sampling-based decoding with batch size $1$ 
because finite logit penalties are meaningful 
only when the decoder samples from the modified distribution.
We keep the sampling hyperparameters fixed 
across the Base model and all finite-penalty variants.
We vary $\lambda$ over 
$\{0.5,1,2,4,8,16,\infty\}$.
Finite values of $\lambda$ provide probabilistic suppression 
and do not guarantee zero violations.
The hard setting with $\lambda=\infty$ removes invalid continuations 
and recovers the hard-constraint behavior of \ours{}.
Rejection sampling is not included in this experiment.
It enforces constraints through a binary accept-or-reject decision 
and does not provide a finite-strength logit penalty.
GUARD is not included in this soft-penalty experiment
since this experiment is an ablation of \ours{} 
that replaces hard masking with a finite logit penalty.
\ours{} supports a continuous range of suppression strengths 
by reusing the same constraint states and token-level masks 
with different logit penalties.
Table~\ref{NCO:app:tab:soft_logit_constraints} 
reports the resulting trade-off among perplexity, violation rate, and throughput.

\begin{table}[ht]
\centering
\caption{Soft logit constraint results. We compare the base model with \ours{}
under different logit penalties. The hard constraint setting corresponds to
$\lambda=\infty$.}
\label{NCO:app:tab:soft_logit_constraints}
\begin{tabular}{lcccc}
\toprule[1.2pt]
Task & Penalty & Perplexity~($\downarrow$) & Violation Rate~($\downarrow$) & Throughput~($\uparrow$) \\
\midrule
\multirow{8}{*}{RTP} & Base & 2.26\small{$\pm$0.02} & 18.00\small{$\pm$0.53} & 44.64 \\
\cmidrule(lr){2-5}
 & $0.5$ & 2.29\small{$\pm$0.01} & 15.67\small{$\pm$0.70} & 44.60\small{$\pm$0.01} \\
 & $1$ & 2.26\small{$\pm$0.02} & 12.40\small{$\pm$0.20} & 44.59\small{$\pm$0.01} \\
 & $2$ & 2.29\small{$\pm$0.03} & 9.07\small{$\pm$0.31} & 44.58\small{$\pm$0.01} \\
 & $4$ & 2.30\small{$\pm$0.02} & 4.47\small{$\pm$0.90} & 44.58\small{$\pm$0.01} \\
 & $8$ & 2.32\small{$\pm$0.04} & 0.60\small{$\pm$0.20} & 44.58\small{$\pm$0.01} \\
 & $16$ & 2.29\small{$\pm$0.04} & 0.00\small{$\pm$0.00} & 44.61\small{$\pm$0.01} \\
 & $\infty$ & 2.29\small{$\pm$0.04} & 0.00\small{$\pm$0.00} & 44.60\small{$\pm$0.02} \\
\midrule
\multirow{8}{*}{Enron} & Base & 4.61\small{$\pm$0.06} & 84.07\small{$\pm$0.99} & 54.22\small{$\pm$0.03} \\
\cmidrule(lr){2-5}
 & $0.5$ & 4.57\small{$\pm$0.04} & 84.53\small{$\pm$0.64} & 53.89\small{$\pm$0.01} \\
 & $1$ & 4.51\small{$\pm$0.03} & 82.93\small{$\pm$1.03} & 53.92\small{$\pm$0.03} \\
 & $2$ & 4.50\small{$\pm$0.05} & 83.13\small{$\pm$1.40} & 53.94\small{$\pm$0.03} \\
 & $4$ & 4.46\small{$\pm$0.01} & 83.00\small{$\pm$1.06} & 53.93\small{$\pm$0.02} \\
 & $8$ & 4.55\small{$\pm$0.03} & 66.20\small{$\pm$1.04} & 53.94\small{$\pm$0.05} \\
 & $16$ & 4.80\small{$\pm$0.05} & 2.93\small{$\pm$0.23} & 53.92\small{$\pm$0.01} \\
 & $\infty$ & 4.83\small{$\pm$0.10} & 0.00\small{$\pm$0.00} & 53.92\small{$\pm$0.02} \\
\bottomrule[1.2pt]
\end{tabular}
\end{table}

\subsection{Scaling Regex Constraints under High Rejection Rates}

We further evaluate scalability under a more challenging regex constraint setting.
The goal of this experiment is to test how decoding methods behave 
when many regex constraints are enforced simultaneously 
and invalid continuations become frequent.
Unlike RTP and Enron, which are tied to specific suppression tasks,
this experiment uses prompts from Alpaca~\citep{TaoriGZDLGLH2023}  
to reflect a more general open ended generation scenario.
We sample 500 prompts from Alpaca 
and evaluate Llama~3.1~8B under regex constraints 
derived from RegexPSPACE~\citep{JinHH2025}.

RegexPSPACE contains 1685 regex constraints in total.
We filter out regexes that accept the empty string $\epsilon$,
since such constraints would reject every possible continuation 
from the beginning of decoding.
The original regexes are defined over the alphabet $\{a,b,c,d\}$.
We replace this alphabet with $\{w,x,y,z\}$ before evaluation.
This substitution prevents the constraints 
from suppressing an excessively large portion of ordinary text 
while preserving the structure of the regexes.
We vary the number of regex constraints 
over $\{10,25,50,100,250,500,668\}$.
The base model is evaluated once.
Rejection sampling and \ours{} are evaluated three times 
for each constraint scale.

\begin{table}[ht]
    \centering
    \caption{Results on the larger regex experiment with Llama 3.1 8B. The regex constraints are drawn from RegexPSPACE, where 668 regexes are retained after filtering 1685 benchmark instances.}
    \label{NCO:app:tab:scaling_regex_constraints}
    \begin{tabular}{lcc}
        \toprule[1.2pt]
        Method & \# Constraints & Throughput~($\uparrow$)\\
        \midrule
        base & - & 44.98\small{$\pm$0.02} \\ 
        \midrule
        \multirow{7}{*}{Rejection Sampling} & 10 & 44.65\small{$\pm$0.03}\\
         & 25 & 44.57\small{$\pm$0.01}\\
         & 50 & 44.48\small{$\pm$0.03}\\
         & 100 & 44.28\small{$\pm$0.04}\\
         & 250 & 43.67\small{$\pm$0.05}\\
         & 500 & 42.56\small{$\pm$0.05}\\
         & 668 & 41.76\small{$\pm$0.05}\\
        \midrule
        \multirow{7}{*}{\ours{}} & 10 & 44.68\small{$\pm$0.05}\\
         & 25 & 44.67\small{$\pm$0.04}\\
         & 50 & 44.65\small{$\pm$0.01}\\
         & 100 & 44.56\small{$\pm$0.03}\\
         & 250 & 44.22\small{$\pm$0.00}\\
         & 500 & 43.72\small{$\pm$0.05}\\
         & 668 & 43.37\small{$\pm$0.03}\\
        \bottomrule[1.2pt]
    \end{tabular}
\end{table}

Table~\ref{NCO:app:tab:scaling_regex_constraints} reports the results.
As the number of regex constraints increases,
both constrained methods face a harder decoding problem 
because more candidate tokens can complete a forbidden match.
The throughput of \ours{} also decreases under the largest constraint sets,
showing that the cost of aggregating masks 
across many regex constraints is not negligible in this regime.
The decrease is nevertheless much smaller than that of rejection sampling.
Rejection sampling suffers a sharp throughput drop 
as invalid candidates become more likely,
since it may need to repeatedly sample and reject tokens 
at the same decoding step.
\ours{} avoids this repeated trial process 
by computing invalid token masks before token selection.
These results show that \ours{} remains substantially more scalable 
than rejection sampling under high rejection rates,
even when enforcing hundreds of regex constraints during generation.

\section{Limitations and Broader Impacts}
\subsection{Limitations}
\label{NCO:app:ssec:limitations}
\ours{} is designed for explicit negative substring constraints.
It prevents user-specified forbidden hard constraints and regex patterns
from appearing in generated outputs,
but it does not address semantic variants 
that are not captured by the constraint specification.
For example, a regex for structured PII 
can block matching email addresses or phone numbers,
but it cannot prevent all semantically sensitive disclosures.
Similarly, a forbidden lexicon can block listed words,
but it may not cover paraphrases, misspellings, or context-dependent harmful content.
Thus, \ours{} should be viewed as a decoding-time mechanism 
for explicit pattern blocking,
rather than a standalone solution for broad semantic safety.

\ours{} intervenes by masking invalid next-token candidates during decoding.
This design ensures that forbidden patterns are not completed 
when the hard mask is used, but it only controls future token choices.
If a forbidden or undesirable span has already been generated 
before the constraint is applied,
\ours{} does not revise or remove that existing prefix.
In such cases, additional mechanisms such as input filtering, 
generation restart, or post-generation inspection may still be needed.

The regular-constraint component of \ours{} relies on constraints 
that can be compiled into finite automata.
This covers many practical regex patterns, including common structured patterns 
such as email addresses and phone numbers.
However, not every feature of modern regex engines corresponds directly 
to a regular language.
Features such as backreferences, lookaround assertions, 
or engine-specific matching semantics may require additional handling 
or may fall outside the finite automaton formulation used by \ours{}.
The practical applicability of regex constraints therefore depends on
the supported regex fragment and the compilation procedure.

Finally, \ours{} reduces decoding-time overhead 
by precomputing token-level transition and masking information 
for a given constraint set.
When the constraint specification is changed,
this precomputation must be updated accordingly.
This is suitable for deployment settings 
where constraint policies are reused across many generations,
but it can introduce extra cost 
when constraints are modified frequently or customized for each request.
Reducing the cost of dynamic constraint updates is an interesting direction 
for future work.

\subsection{Broader Impacts}
\label{NCO:app:ssec:broader_impacts}
\ours{} may have positive societal impacts by helping reduce undesirable generations
such as structured PII patterns, profanity, 
and other explicitly specified harmful strings 
or regular-expression patterns at decoding time. 
Because the method acts as a logit-level intervention 
without retraining the underlying model, 
it may be useful for improving safety 
and privacy in deployed generation systems.

At the same time, \ours{} should not be understood 
as a complete solution to model safety. 
The method enforces explicit string and regex constraints, 
and therefore cannot reliably capture semantic harms 
that are not specified by the constraint set. 
Overly broad or poorly designed constraints may also cause overblocking,
reduce generation quality, or be used to suppress benign content.
These risks motivate careful constraint design, evaluation on target use cases,
and the use of NCO as one component of a broader safety pipeline 
rather than as a standalone safeguard.

% \newpage

% \input{checklist.tex}

\end{document}